\definecolor{codegreen}{rgb}{0,0.6,0}
\definecolor{codegray}{rgb}{0.5,0.5,0.5}
\definecolor{codepurple}{rgb}{0.58,0,0.82}
\definecolor{backcolour}{rgb}{0.95,0.95,0.92}
\lstdefinestyle{mystyle}{
    commentstyle=\color{codegreen},
    keywordstyle=\bf,
    stringstyle=\color{magenta},
    basicstyle=\ttfamily,
    breakatwhitespace=false,
    breaklines=true,
    captionpos=b,
    keepspaces=true,
    numbers=none,
    numbersep=5pt,
    showspaces=false,
    showstringspaces=false,
    showtabs=false,
    tabsize=2
}
\newcommand{\ob}[1]{{#1}}
\newtheorem{lemma}{Lemma}
\theoremstyle{definition}
\newtheorem{remark}{Remark}
\newtheoremstyle{break}
  {\topsep}{\topsep}%
  {\itshape}{}%
  {\bfseries}{.}%
  { }{}%
\theoremstyle{break}
\newtheorem{definition}{Definition}
\title{How to Evaluate Entity Resolution Systems:\\ An Entity-Centric Framework with Application to\\Inventor Name Disambiguation}
\author[1]{Olivier Binette}
\author[1]{Youngsoo Baek}
\author[2]{Siddharth Engineer}
\author[2]{Christina Jones}
\author[3]{Abel Dasylva}
\author[1]{Jerome P. Reiter}
\affil[1]{Duke University}
\affil[2]{American Institutes for Research}
\affil[3]{Statistics Canada}
\date{\today}
\begin{document}
\maketitle

\begin{abstract}
Entity resolution (record linkage, microclustering) systems are notoriously difficult to evaluate. Looking for a needle in a haystack, traditional evaluation methods use sophisticated, application-specific sampling schemes to find matching pairs of records among an immense number of non-matches. We propose an alternative that facilitates the creation of representative, reusable benchmark data sets without necessitating complex sampling schemes. These benchmark data sets can then be used for model training and a variety of evaluation tasks. Specifically, we propose an entity-centric data labeling methodology that integrates with a unified framework for monitoring summary statistics, estimating key performance metrics such as cluster and pairwise precision and recall, and analyzing root causes for errors. We validate the framework in an application to inventor name disambiguation and through simulation studies. Software: \url{https://github.com/OlivierBinette/er-evaluation/}
\end{abstract}

\section{Introduction}

Entity resolution is the process of identifying and linking database records referring to the same entity, such as a person or organization \citep{Christen2012, Christophides2019, Papadakis2021, Binette2022a}. In the absence of a reliable unique identifier, this is a large-scale clustering task: records need to be grouped into clusters, each representing a unique entity. In many applications, these clusters are numerous yet small. For example, entity resolution is used for the identification of unique inventors listed on U.S.\ Patents and Trademarks Office (USPTO) patents. This must account for commonalities in inventor names and the frequent occurrence of errors and variations in recorded names \citep{Li2014}. The task involves resolving millions of unique inventors, many of whom have authored only a handful of patents, while others may have contributed to hundreds.

Entity resolution systems typically employ machine learning and artificial intelligence models to tackle this challenge. The models use contextual information to predict whether or not two records refer to the same entity. For example, we can approximately resolve unique inventors by using patent topic, co-authors, employer, and location, in addition to first and last names \citep{Monath2021}. A large body of research has considered this problem, as this is an error-prone process that is further complicated by the scale of the clustering task \citep{Li2014, Huberty2014, Pezzoni2014, Balsmeier2015, Ventura2015, Kim2016, Yang2017, Doherr2017, han2019disambiguating, Yin2020, Monath2021, Doherr2021}.

In this article, we  address evaluating the accuracy of entity resolution systems. Traditional evaluation methods often rely on manually reviewing pairs of records to validate linkage predictions. However, finding matching pairs, especially those missed by the entity resolution system, is much like looking for a needle in a haystack: in a database of $n$ records, there are $\mathcal{O}(n^2)$ non-matches and only $\mathcal{O}(n)$ matches. Even if these matches can be found using a smart sampling scheme built around a specific entity resolution system, the resulting data are not necessarily well-suited for evaluating or training other models, or for estimating other metrics besides pairwise precision and recall.

Instead of reviewing pairs of records, our evaluation approach utilizes a sample of fully-resolved entities, {i.e., ground truth, or known clusters}. In this approach, all pairs within a resolved cluster are known to match, and any pair that intersects a resolved cluster but is not contained within it is known to be a non-match. For example, a resolved cluster of $10$ records in a database of $1,000,010$ records includes $ {10 \choose 2} = 45$ matching pairs and excludes $10$ million non-matching pairs. We show that sampling ground truth clusters, and using the resulting matches and non-matches, facilitates the estimation of performance metrics without having to rely on sophisticated pairwise sampling schemes that find sufficient numbers of matching pairs among the massive number of nonmatching pairs \cite[e.g., as in ][]{Marchant2017}.


In order to use fully-resolved entities as the starting point of evaluation, we propose an \textbf{entity-centric} evaluation framework with the following components.

\begin{description}[leftmargin=2em, labelindent=1em]
    \item[1. Cluster-Wise Error Metrics:] To identify errors made by an entity resolution system, we compare predicted clusters against a sample of known, fully-resolved clusters through error metrics defined at the record and cluster levels (section \ref{sec:error_space_definition}).
    \item[2. Global Performance Metric Estimates:] To obtain estimates of global performance metrics such as pairwise and b-cubed precision and recall \citep{Michelson2009, Menestrina2010, Barnes2015}, we express them as weighted aggregates of cluster-wise error metrics. This helps obtain estimates that are representative of the system’s performance on the entire data set, not just the benchmark (section \ref{sec:performance_estimation}).     
    \item[3. Error Analysis:]  To analyze the root causes of errors, we relate errors to entity features extracted from resolved clusters of records. (section \ref{sec:statistical-analyses}).
\end{description}
Furthermore, to support this process, we introduce:
\begin{description}[leftmargin=2em, labelindent=1em]
    \item[4. Data Labeling Though Cluster Sampling:]  A methodology for creating a benchmark set of fully-resolved entities through manual data labeling (section \ref{sec:data_labeling}).
    \item[5. Monitoring Statistics:] A set of summary statistics that serve to monitor the performance of entity resolution systems, even in the absence of a benchmark data set (section \ref{sec:summary_statistics}).
\end{description}
Our framework does \textit{black box} evaluation. That is, we evaluate the end result of an entity resolution system, without considering its specific architecture. This allows our framework to apply to any entity resolution system, as long as it produces a clustering as an output.

Figure \ref{fig:er-framework-diagram} represents the elements of the evaluation framework as well as their interdependencies.

\begin{figure}
    \centering
    \includegraphics{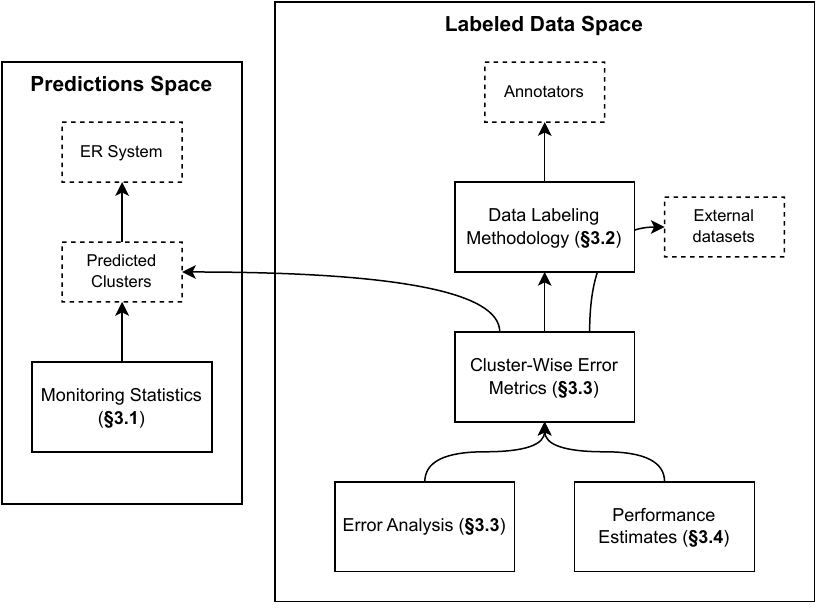}
    \caption{Diagram representation of the main elements of the framework and their dependencies. The entity resolution system and its predictions live in what we call the ``prediction space." Monitoring statistics can be computed for predictions, while their true value for a ground truth clustering can be estimated using labeled data. In the labeled data space, cluster-wise error metrics are obtained from external benchmark datasets or from our data labeling methodology. Error analysis and performance estimates rely on cluster-wise error metrics.}
    \label{fig:er-framework-diagram}
\end{figure}

\subsection{Previous Work}

Evaluation is a critical element of iterative model development, model selection, and validation of results. Despite the central importance of evaluation in the development and implementation of entity resolution systems, the topic has received scant attention in the literature. 

The most well-studied aspect of evaluation for entity resolution concerns the definition of performance evaluation metrics such as precision and recall \citep{Bilenko2003}, the b-cubed metric \citep{bagga1998algorithms}, generalized merge distances \citep{Maidasani2012}, and the use of crowdsourcing for data labeling \citep{Christophides2019}. However, this literature does not account for the statistical challenges involved in estimating these metrics from limited data (e.g., a non-representative benchmark data set or a small sample) which is necessary for their reliable use. The statistical literature focused on these estimation challenges is also limited \citep{Marchant2017, Dasylva2020, Binette2022b}. Furthermore, the topics of quality assurance, monitoring, and error analysis {appear mostly understudied in the entity resolution} literature.

\ob{Typical evaluation procedures used in entity resolution can result in misleading conclusions, as some commonly-used performance estimators are biased \citep{Wang2022, Binette2022b}. This can lead} to inaccurate representation of model performance and erroneous ranking of competing methods. In particular, the naive computation of pairwise precision on benchmark data sets has been shown to provide over-optimistic results \citep{Binette2022b}. Precision computed on benchmark data sets is often close to $1$, even when the true precision for the entire data set may be much lower. When combining biased precision estimates with recall estimates into an F1 score, this can lead to {performance rank reversals}: an algorithm may be assessed to perform better than another with high confidence, despite the opposite being true in practice \citep{Binette2022b}. \ob{In other words, comparing entity resolution algorithms based on F1 scores computed on benchmark data sets (such as in \cite{Yin2020}) leads to rankings not representative of performance on larger populations.}

\subsection{Outline of the Paper}

The rest of the paper is structured as follows. In section \ref{sec:background}, we provide background on our motivating application and current approaches to evaluation. In section \ref{sec:methods}, we introduce the proposed methodology.  In  section \ref{sec:results}, we showcase its application to inventor name disambiguation and its validation in simulation studies. Finally, we conclude in section \ref{sec:discussion} with a summary of our contributions and directions for future work.

\section{Background}\label{sec:background}

We begin this section with an overview of the disambiguation work carried out by the American Institutes for Research (AIR) for PatentsView.org.
We then discuss our motivating data and application in more detail. Finally, we provide additional information on current industry standards for the evaluation of entity resolution systems, using the methodology used by Statistics Canada as an example. 

\subsection{Patent Data Disambiguation for PatentsView.org}

PatentsView is a public patent data platform maintained by the American Institutes for Research and the U.S.\  Patents and Trademarks Office. It increases the value, utility, and transparency of U.S.\ patent data by providing enriched data products, data visualizations, and data exploration tools.

As one of its main contributions, PatentsView disambiguates patent inventors, assignees, lawyers, and locations. This embeds patent data in a large knowledge graph that links these individual entities through co-authorship, ownership, and citation relationships. However, this disambiguation is a significant challenge given the absence of unique identifiers for these entities, and given the large amount of noise and ambiguity in the data. PatentsView addresses this challenge by employing an assortment of disambiguation algorithms and updating the disambiguation for new data on a quarterly basis.

A variety and growing set of users have 
made use of PatentsView's data since its full launch in January 2017. Around 50,000 users visited the PatentsView website in 2023, with 75,000 downloads of bulk data files and an average of one million API requests per month. In Figure \ref{fig:patentsview}, we show the estimated numbers of citations to PatentsView in the academic literature over time and by Dewey Decimal subject classification.

\begin{figure}
    \centering
    \includegraphics[width=0.45\linewidth]{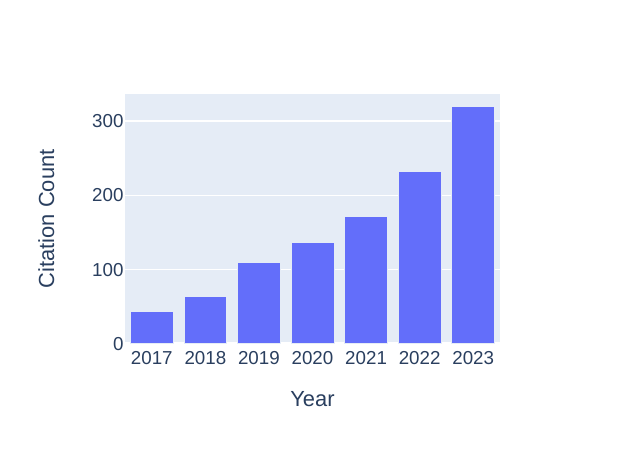}
    \includegraphics[width=0.45\linewidth]{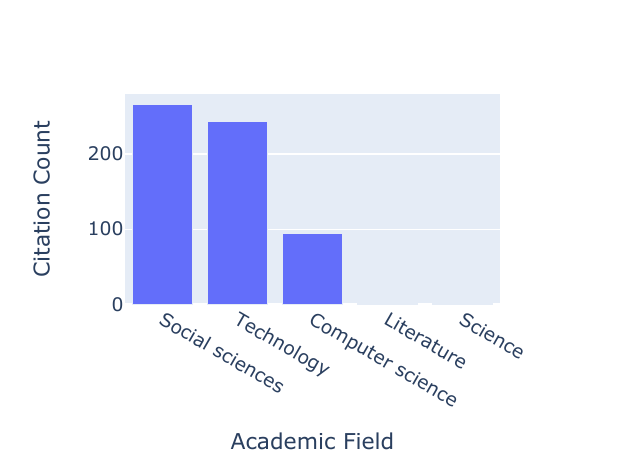}
    \caption{\textbf{Left:} Estimated number of citations to PatentsView in academic literature by year. \textbf{Right:} Number of citations by estimated Dewey Decimal Broad Classification. The estimated citation numbers were obtained by searching Google Scholar for mentions to ``PatentsView'' and ``Patents View'' and reviewing all results, with the 2023 year estimate containing extrapolated counts for November and December. The Dewey Decimal Classification categories were obtained by extracting abstracts from papers and programmatically querying openAI's GPT-3.5 model for a classification estimate. Note that GPT-3.5 could not ascertain the classification code for 189 papers.}
    \label{fig:patentsview}
\end{figure}

\subsection{Motivating Data and Application}\label{sec:motivating_data}

We consider inventor data from U.S.\ patents granted between 1976 and November 2023, inclusively, obtained from the bulk data download page of PatentsView.org \citep{uspto2023datadownload}. \ob{These data are} aggregated in a single table indexed by {inventor mentions}, where each row of this table is referred to as a {record}. An inventor mention is a reference to a specific inventor (identified by authorship sequence number) on a specific patent (identified by patent number). For each inventor mention, we have information such as inventor first and last name, inventor location, patent co-inventor names, patent title, patent abstract, patent application date, patent grant date, patent assignee, and patent classification \ob{codes; see Table \ref{tab:data-example} for examples.}

Additionally, we have the results of PatentsView's disambiguation algorithm applied to 18 individual data releases between August 2017 and November 2023. At each release, more inventor mentions were disambiguated as new patents were granted. Furthermore, the disambiguation algorithm was occasionally tweaked in this time period. For each data release, a disambiguation is available that consists of a membership vector assigning each inventor mention to a unique inventor identifier. Our evaluation framework specifically targets this sequence of disambiguation results.

\begin{table}[t]
\caption{\label{tab:data-example} Example of attributes available for individual inventor mentions. Additional attributes, such as co-inventor information and patent text, are available as well.}
\centering
\begin{adjustbox}{width=\textwidth}
\begin{tabular}{lllllllllll}
\toprule
\textbf{Mention ID} & \textbf{Patent} & \textbf{Name} & \textbf{City} & \textbf{State} & \textbf{Country} & \textbf{Year} & \textbf{Title} & \textbf{Kind} & \textbf{Assignees} & \textbf{Classification} \\
\midrule
US8501339-3 & 8501339 & Lutgard C. De Jonghe & Lafayette & CA & US & 2013 & Protected lit... & B2 & ['PolyPlus Battery Company'] & H01G \\
US6085742-1 & 6085742 & Stuart Lindsay & Tempe & AZ & US & 2000 & Intrapulmonary del... & A & ['Aeromax Technologies, Inc.'] & A61M \\
US9215286-6 & 9215286 & Elin R. Pedersen & Portola Valley & CA & US & 2015 & Creating a social & B1 & ['Goolge Inc'] & H04L \\
\bottomrule
\end{tabular}
\end{adjustbox}
\end{table}

A number of benchmark data sets are available to help assess the performance of PatentsView's disambiguation system \citep{Binette2022c}. Most of these data sets provide ``ground truth" disambiguation (of various quality) of a specific subset of inventors for a given time period. They are all cluster samples that fit our evaluation framework. However, except for the new inventor benchmark data developed in \citep{Binette2022b, Binette2022c}, none of the benchmark data sets are representative of the full population of inventors or have sampling weights associated with them.

\subsection{Industry Standards for Entity Resolution Evaluation}\label{sec:industry_standards}

We look at Statistics Canada to understand common industry practices for the evaluation of entity resolution systems. The institution has been at the forefront of record linkage since the seminal paper of \cite{fellegi_theory_1969}, who were then working at Statistics Canada, and with Fellegi becoming Chief Statistian of Canada from 1985 to 2008. Research groups at Statistics Canada have continued to advance the field in recent years, including on the topic of evaluation.

For context, Statistics Canada routinely performs the linkage of many data sets including censuses, administrative data, and survey data. These linkages are classified according to their purpose as analytical or operational. In the former case,  the main goal is to produce a linked file including all the required responses and explanatory variables, to fit some regression model. A good example is the linkage of the Canadian Community Health Survey to the Canadian mortality database \citep{sanmartin_et_al_2016}. Many of the analytical linkages are implemented within the Social Data Linkage Environment \citep{sdle_webpage_2022}. All other linkages are classified as operational, such as linkages that support specific steps in a sample survey, e.g., frame maintenance operations, like removing duplicates or linking different sampling frames in a multi-frame survey. For all the linkages, regardless of their purpose, the accuracy of the linkage decisions is measured with clerical reviews or a statistical model.

When the purpose is analytical, a linked data set is created that may end up in a research data center, including quality indicators for the data users.
According to \cite{qian_et_al_2021}, the indicators should comprise the linkage rate, the linkage representativeness (LR) according to \cite{vanderlaan_baaker_2015}, and the precision and the false negative rate
to characterize the linkage accuracy at the level of the record pairs. The LR metric highlights potential biases, here linkage rate differences across subgroups, by measuring the variance of estimated linkage propensities.

At Statistics Canada, accuracy measures are estimated primarily through clerical reviews \citep{dasylva_et_al_review_2016} that can be carried out with G-LINK, the agency's generalized system for probabilistic record linkage. These reviews consist of visual inspections on a probability sample of record pairs to determine if their constituent records refer to the same unit. When the linkage is probabilistic, the pairs are typically stratified according to their linkage weights \citep[chap. 5]{millard_blanchard_2022}. For quality control, the same pair may be reviewed by two or more clerks \citep{dasylva_et_al_review_2016}.
 For analysis, further activities are conducted to evaluate the linkage quality \citep[chap. 6, chap. 7.5]{rlppm_2017}.

For operational linkages, the linkage accuracy may also be measured with clerical reviews as in the census over-coverage study \citep[chap. 8.2.1]{statcan_2016_census_coverage}, where the reviews are based on sampling groups of connected records \citep{dasylva_et_al_2015}. These groups are essentially connected components in the graph where the vertices are the records and the edges are the links. To cut costs, Statistics Canada is also considering model-based estimates 
\citep{dasylva_goussanou_jjsds_2022}. For example, this approach was used to set the record similarity threshold (i.e., how similar two records have to be before they are linked) in the probabilistic linkage between the 2021 census of agriculture and the census of population for the same year \citep{statcan_2021_ag_census_linkage}.

\section{Methodology}\label{sec:methods}

We now describe our evaluation framework in detail.

Let $\mathcal{R}$ be a set of $N$ records. Each record in $\mathcal{R}$ is a reference to a unique entity (an entity mention), with some entities being referred to by multiple records. Two records are said to be coreferent, or to match if they refer to the same entity. Note that coreference is an equivalence relation: if records $A$ and $B$ refer to the same entity, and records $B$ and $C$ refer to the same entity, then records $A$ and $C$ also refer to the same entity. As such, coreference induces a clustering $\mathcal{C}$ of $\mathcal{R}$, with two records being in the same cluster if they refer to the same entity. We refer to $\mathcal{C}$ as the ``true" clustering, i.e., it represents the true identity relations that entity resolution aims to recover. An entity resolution system outputs a predicted clustering $\hat{\mathcal{C}}$. To evaluate the entity resolution system, we introduce a methodology that samples clusters $c\in \mathcal{C}$ and use them to assess the accuracy of $\hat{\mathcal{C}}$. Throughout, we use the $c(r)$ to denote the cluster in $\mathcal{C}$ containing a given record $r \in \mathcal{R}$. Similarly, we let $\hat c(r)$ be the predicted cluster in $\hat{\mathcal{C}}$ containing a given $r \in \mathcal{R}$. 

To fix ideas, we can take $\mathcal{R}$ as the set of inventor mentions on granted U.S. patents. The clustering $\mathcal{C}$ represents the true grouping of inventor mentions according to the real-world inventor that they represent, i.e., each cluster $c\in\mathcal{C}$ links to a set of patents authored by this inventor. The predicted clustering $\hat{\mathcal{C}}$ is the output of an inventor disambiguation system.

\subsection{Summary Statistics and Quality Assurance}\label{sec:summary_statistics}

The first component of our evaluation framework is a set of summary statistics \ob{that} 
describe properties of any given disambiguation result. The goal of these statistics 
is to provide key indicators \ob{that} can be tracked to understand and monitor disambiguation results throughout the lifetime of an entity resolution system. They are simple and easily interpretable statistics \ob{that} can help explain properties of the clustering. Additionally, these statistics act as quality assurance indicators \ob{that} can be automatically monitored to identify potential bugs and errors. In particular, estimated summary statistics can be monitored using standard quality assurance tools such as control charts \citep{montgomery2020introduction} and anomaly detection \citep{chandola2009anomaly}.


The statistics we propose are described below.
\begin{definition}[{\bf Cluster Size Distribution Statistics}] \label{def1}
    Let $\mathcal{C}$ be a clustering of a set of records $\mathcal{R}$. We define the following metrics regarding the distribution of cluster sizes in $\mathcal{C}$.
    \begin{description}[leftmargin=2em, labelindent=1em]
        \item[Average Cluster Size:] The average cluster size is defined as $R_{\text{size}} = \frac{1}{\lvert \mathcal{C} \rvert}\sum_{c\in \mathcal{C}} \lvert c \rvert$.
        \item[Matching Rate:] The 
        matching rate statistic $R_m$ is the proportion of records $r \in \mathcal{R}$ that are linked to some other records, {i.e., are part of a cluster with at least two records.} Namely, $R_m = \frac{1}{\lvert \mathcal{R} \rvert} \sum_{r\in \mathcal{R}}\mathbb{I}(\lvert c(r) \rvert > 1)$.
        \item[Cluster Hill Numbers (Entropy Curve):] 
        For a given order $q \geq 0$, the corresponding Hill number $H_q$ is the exponentiation of the Rényi entropy of order $q$ of the cluster size distribution. That is, given a clustering $\mathcal{C}$, we have
        \begin{equation}\label{eq:hill_numbers}
            H_q = \left(\sum_{i=1}^N \left(\mathbb{P}(\lvert c\rvert = i)\right)^q \right)^{1/(1-q)},
        \end{equation}
       where $\mathbb{P}(\lvert c\rvert = i) = \tfrac{1}{\lvert\mathcal{C}\rvert}\sum_{c \in \mathcal{C}} \mathbb{I}(|c|=i)$. Here, \eqref{eq:hill_numbers} is 
        continuously extended \ob{at $q=0$ and $q=1$, and taking the limit from the left for $q=\infty$.}
        \end{description}
\end{definition}
The set of Hill numbers $\{H_q\}_{q\geq 0}$ uniquely characterizes the ordered cluster size distribution and provides interpretable statistics. For instance, $H_0$ is the number of unique cluster sizes in the distribution; $H_1$ is the exponential Shannon entropy; $H_2$ is the inverse of the probability that two random inventors have authored the same number of patents; and, $H_\infty$ is the prevalence of the most common cluster size.
Since Hill numbers are continuous in their parameter $q$, they provide a simple representation of the ordered cluster size distribution as a continuous curve.

The next set of metrics quantifies the level of noise in the data. Suppose that each cluster element $r$ is associated with a label, such as an inventor's name listed on a patent. For a given inventor, listed names may differ on different patent applications. Additionally, multiple inventors may share the same name. To quantify these two situations, we introduce the {homonymy rate} and {name variation rate} statistics.

\begin{definition}[Variation and Homonymy Rate Statistics]\label{def2}
    Let $\mathcal{C}$ be a clustering of a set of records $\mathcal{R}$, where each record is associated with a label $s_r$, 
    and let $n(r)$ be the set of records with the same labels as $r$, i.e., $n(r) = \{r'\in \mathcal{R} \mid s_{r'} = s_r\}$. We define the following metrics to describe label similarity across clusters and label variation within clusters.
    \begin{description}[leftmargin=2em, labelindent=1em]
        \item[Homonymy Rate:] The homonymy rate $R_h$ is the proportion of clusters containing a record that shares its label with another cluster. That is, the homonymy rate is defined as
        \begin{equation}\label{eq:homonymy_rate}
            R_h = \frac{1}{\lvert \mathcal{C} \rvert} \sum_{c \in \mathcal{C}} \mathbb{I}(\exists r \in c \,:\, n(r) \not\subset c(r)).
        \end{equation}
        \item[Name Variation Rate:] The name variation rate $R_v$ is the proportion of clusters with variation among the record labels. That is, the name variation rate is defined as
        \begin{equation}\label{eq:name_variation_rate}
            R_v = \frac{1}{\lvert \mathcal{C} \rvert} \sum_{c \in \mathcal{C}} \mathbb{I}(\exists r \in c \,:\, c(r) \not\subset n(r)).
        \end{equation}
    \end{description}
\end{definition}

\begin{remark}
Various measures of cluster homogeneity/compactness and separation have been proposed in the literature and are used as part of the objective functions of unsupervised clustering algorithms \citep{Johnson1967, Davies1979, Liu2010, duran2013cluster}. When using clustering algorithms relying on such statistics, these can be reported in addition to our proposed summary statistics.
\end{remark}



The above summary statistics can either be applied to the predicted clustering $\hat{\mathcal{C}}$ (replacing $\mathcal{C}$ by $\hat{\mathcal{C}}$ in definitions \ref{def1} and \ref{def2}), or estimated for the unknown ground truth clustering $\mathcal{C}$ by using a sample of clusters. That is, suppose that a sample of $k$ clusters, $c_1,\ldots,c_k$,
 is taken from the true clustering $\mathcal{C}$. We write $p_c$ for the probability (up to a normalizing constant) that a given cluster $c$ is sampled in any one draw. For example, with probability proportional to size sampling, we have $p_c = \lvert c \rvert / N$. Sampling processes and designs are discussed in more detail in section \ref{sec:data_labeling}.

To facilitate estimation of the {average cluster size} and {matching rate} in Definition \ref{def1}, we re-express 
  $R_{size}$ and   $R_{m}$ 
 as ratios of expectations. In particular, suppose we randomly sample a single cluster $c$, where the probability of sampling a given $c \in \mathcal{C}$ is proportional to $p_c >0$. The quantities in the numerators and denominators of $R_{size}$ and $R_{m}$ can be rewritten as   expectations with respect to this sampling distribution. 
That is, 
\begin{equation}\label{eq:ratio_rep}
    R_{size} = \frac{\mathbb{E}[|c|/p_c]}{\mathbb{E}[1/p_c]}, \quad R_{m} = \frac{\mathbb{E}[|c|\mathbb{I}(|c|>1)/p_c]}{\mathbb{E}[|c|/p_c]}.
\end{equation}
For example, when sampling clusters with probability proportional to size, we have $p_c = |c|/N$. Consequently, $\mathbb{E}[1/p_c] = \sum_{c \in \mathcal{C}} [1/p_c] p_c = |\mathcal{C}|$, which matches the denominator of $R_{size}$ in Definition \ref{def1}.   
We then use the sample of $k$ clusters to estimate the expectations in the numerators and denominators separately---for example, we can estimate each expectation with its corresponding weighted sample average---and  take the ratio of the relevant  estimated expectations. 
Alternatively, as described in section \ref{sec:estimators}, we also can use the bias-adjusted estimator and variance estimator described in \cite{Binette2022b}.
Similarly, the \textbf{homonymy rate} and \textbf{name variation rate} can be written as ratios of expectations corresponding to the numerators and denominators of \eqref{eq:homonymy_rate} and \eqref{eq:name_variation_rate}, which then can be estimated  in the same way.

\ob{Estimating the {Hill numbers} 
is a more challenging 
task. Several proposals for estimating a whole curve of Hill numbers exist in ecological and biological diversity estimation \citep{Chao1984,Chao2013,Chao2014}. These apply in our context only when considering uniform sampling weights. We therefore leave the problem of estimating Hill numbers for future work.}

\subsection{Data Labeling Methodology}\label{sec:data_labeling}

Data labeling is often needed to construct benchmark data sets suitable for evaluation. Here, we \ob{expand and formalize} the methodology introduced in \cite{Binette2022b} for practical and cost-effective data labeling.

Fundamentally, our goal is to obtain a probability sample 
of known, ground truth clusters. To do so, we suggest sampling $k$ records $r_1, \dots, r_k$ 
and having data annotators recover the associated clusters $c(r_1), \dots, c(r_k)$. 

A variety of tools can help labelers build these ground truth clusters, including using predicted clusters or blocking as a starting point, and using search tools for identifying candidate matches. Below, we suggest a simple methodology that we have found useful in our application.

The methodology relies on two main components, namely 
(a) a predicted disambiguation used as the starting point of the data labeling, and (b) a search tool used to identify candidate matches for manual review.
For (a), we use an entity resolution algorithm to provide a set of predicted clusters to aid in the disambiguation. In the application to PatentsView.org described in \cite{Binette2022b}, 
the current inventor disambiguation was used as the starting point of data labeling. Otherwise, a simple exact name-matching disambiguation could be used. For (b), we used PatentsView.org's search tool. It is also possible to use spreadsheets to parse through subsets of records, or to use elasticsearch \citep{elasticsearch} as a search backend. The use of elasticsearch provides tolerance to typographical errors and intuitive term-based search functionality that may be familiar to data labelers.

Given a population of records $\mathcal{R}$ and a sample size $k$, the data labeling methodology is as follows:
\begin{enumerate}[label=(\roman*)]
    \item First, sample a sequence of records $S = (r_1, r_2, \dots, r_k)$,  $r_i \in \mathcal{R}$.
    \item For each sampled record $r$, we recover the corresponding {predicted} cluster $\hat c(r)$ from (a) above.  We then perform  the following three steps:
        \begin{enumerate}[label=(\Alph*)]
            \item The data labeler takes note of \textbf{overclustering errors}: records in the predicted cluster $\hat c(r)$ that are not part of $c(r)$. We denote by $A_r$ the set of overclustering errors that the data labeler aims to identify, defined as
            \begin{equation}\label{eq:def_A_r}
                A_r = \hat c(r) \backslash c(r).
    \end{equation}
            \item The data labeler takes note of \textbf{underclustering errors}: records that are in $c(r)$ but that are not in $\hat c(r)$. These can be found by using the search tool (per \ob{(b)} above).  We denote by $B_r$ this set of underclustering errors that the data labeler aims to identify, defined as
            \begin{equation}\label{eq:def_B_r}
                B_r = c(r) \backslash \hat c(r).
    \end{equation}
            \item Given $A_r$ and $B_r$, the true cluster associated with record $r$ is $c(r) = \hat c(r) \backslash A_r \cup B_r$.  
        \end{enumerate}
\end{enumerate}
At the end of this process, we have a sequence of sampled ground truth {clusters $c(r_1), c(r_2),  \dots, c(r_k)$ associated with each sampled record. Put together, the ground truth clusters form a benchmark data set $C_S$ that {can} be used for evaluation.}

\subsubsection{Sampling Schemes} \label{sec:probability_sampling}



Many different designs can be used to sample clusters. We recommend randomly sampling records with replacement and finding associated clusters, which is straightforward to implement and facilitates estimation.
In this case, the probability that a given cluster $c$ is sampled in any single draw is $\lvert c \rvert / N$, i.e., this is sampling clusters with probability proportional to their sizes.


Sampling clusters with probability proportional to $|c|$ can result in increased accuracy relative to simple random sampling of clusters.  In particular, when a cluster-level outcome of interest is correlated with the size of the cluster, the probability proportional to size design offers smaller standard errors in estimates of population quantities \citep{lohr2021sampling}.  This is the case for PatentsView data, as large clusters tend to be associated with increased chances for errors.



Other sampling designs can be leveraged, in which case the probabilities $p_c$ should change to match the design. For example, uniform sampling probabilities $p_c \propto 1$ can be appropriate when entities, rather than records, are sampled at random. As another example, suppose the disambiguation algorithm provides match probabilities between all pairs of records;  that is, we have probabilities $p_{r,r'}$ that records $r, r' \in \mathcal{R}$ are a match for all $(r, r')$. Then, for a given record $r \in \mathcal{R}$, the expected number of records to be removed to the predicted cluster $\hat c(r)$ in step (A) of the data labeling methodology, the expected overclustering error from \eqref{eq:def_A_r}, is
\begin{equation}
    \mathbb{E}[\lvert A_r \rvert ] = \sum_{r' \in \hat c(r)} (1-p_{r, r'}).
\end{equation}
Similarly, the expected number of records to be added to the predicted cluster $\hat c(r)$ in step (B) of the data labeling methodology, the expected underclustering error from \eqref{eq:def_B_r}, is
\begin{equation}
    \mathbb{E}[\lvert B_r \rvert ] = \sum_{r' \in \mathcal{R}\backslash \hat c(r)} p_{r, r'}.
\end{equation}
We can sample 
records with probabilities proportional to the sum of these two expectations, which could facilitate more accurate estimation of key metrics.  However, computing $p_c$ would be more complicated in this design, as it requires fitting a probabilistic record linkage model as a first step.

\subsubsection{Quality Control}

Following data labeling, a quality control step is used to identify obvious errors in the labeling. Through automated methods and validation with the data labeler, this helps correct typographical errors and unintentional errors without changing the intent of the labeler.

The first two properties used for quality control are the fact that $A_r \subset \hat c(r)$ and that $r \not \in A_r$. If the set of overclustering errors identified by a data labeler does not satisfy these constraints, then an error was made. To help identify errors in the set of underclustering errors identified by a data labeler, one can look for records that are not part of the same block as $r$, or that have highly dissimilar attributes to $r$. These simple validations can catch most typographical and annotation errors in our experience.

\subsection{Error Analysis}\label{sec:error_analysis}

We now consider the problem of analyzing entity resolution errors identified through data labeling. This is done in complement to performance estimation (section \ref{sec:performance_estimation}) that provides representative performance metrics, such as precision and recall, for a given population of records $\mathcal{R}$.

To motivate error analysis, note that adequate performance as measured by performance metrics is a necessary but insufficient characteristic of machine learning systems \citep{Zhang2022}. Complex or black-box machine learning systems can fail in intricate or unexpected ways that are not acceptable, even when good overall performance is achieved \citep{Oakden-Rayner2020}. For example, systematic failure to disambiguate inventor names from a given culture, for instance due to differences in naming conventions, could be considered an unacceptable flaw even if the relative prevalence of such inventors is low. It is therefore necessary to test systems and to investigate errors to help identify such issues \citep{Zhang2022, Poth2020}. Broadly, the goals of error analysis are to:
\begin{enumerate}[label=(\roman*)]
    \item Identify patterns in the error space. This includes identifying areas of low performance and performance disparities between subgroups.
    \item Identify systematic failures and their cause. For example, a simple systematic failure may be related to the use of punctuation marks in names.
\end{enumerate}

Our approach to error analysis has two main steps.
In section \ref{sec:error_space_definition}, we define record-wise and cluster-wise error metrics to obtain an interpretable and relevant error space to analyze. 
In section \ref{sec:statistical-analyses}, we show how to analyze performance disparities by subgroups and we perform error auditing to identify common causes for errors.
\ob{Note that many other} approaches from the machine learning testing literature could also be relevant \ob{but go beyond the scope of this paper \citep{Murphy2008, Ramanathan2016, Zhang2022, Braiek2020, Aggarwal2019, Tuncali2020}.}

\subsubsection{Error Metrics Defined at the Record and Cluster Levels}\label{sec:error_space_definition}

We propose metrics to quantify the errors made by a predicted clustering $\hat{\mathcal{C}}$, 
defined as follows.
\begin{definition}[Record-Wise Error Metrics]
    Let $\mathcal{C}$ be a clustering of a set of records $\mathcal{R}$, let $\hat{\mathcal{C}}$ be a predicted clustering of $\mathcal{R}$, and let $r \in \mathcal{R}$ be a given record. We define the following error metrics for comparing the true entity cluster $c(r)$ associated with $r$ to the predicted cluster $\hat c(r) \in \hat{\mathcal{C}}$ associated with $r$:
    \begin{description}[leftmargin=2em, labelindent=1em]
        \item[Error Indicator ($\texttt{EI}$): ] This is a binary error indicator defined as $\texttt{EI}(r) = 0$ when the predicted cluster $\hat c(r)$ equals the true cluster $ c(r)$, and $\texttt{EI}(r) = 1$ otherwise.
        \item[Size Difference Error ($\texttt{SDE}$):] This is the difference in size between the predicted cluster $\hat c(r)$ and the true cluster $c(r)$, defined as
        $
            \texttt{SDE}(r) = \lvert \hat c(r) \rvert - \lvert c(r) \rvert.
        $
        \item[Overclustering Error ($\texttt{OCE}$):] This is the number of records in the predicted cluster $\hat c(r)$ that are not part of the true cluster $c(r)$, defined as 
        $
            \texttt{OCE}(r) = \lvert A_r \rvert = \lvert \hat c(r) \backslash c(r) \rvert.
        $
        \item[Underclustering Error ($\texttt{UCE}(r)$):] This is the number of records in the predicted cluster $\hat c(r)$ that are not part of the true cluster $c(r)$, defined as  
        $
            \texttt{UCE}(r) = \lvert B_r \rvert = \lvert c(r) \backslash \hat c(r) \rvert.
        $
    \end{description}
    Additionally, we define the \textbf{relative overclustering error} ($\texttt{ROCE}$) as $\texttt{ROCE}(r) = \texttt{OCE}(r)/\lvert \hat c(r) \rvert$ and the \textbf{relative underclustering error} ($\texttt{RUCE}$) as $\texttt{RUCE}(r) = \texttt{UCE}(r)/ \lvert c(r) \rvert$.
\end{definition}

For a given cluster  $c \in \mathcal{C}$, we can define corresponding metrics by  averaging over its internal records $r \in c$.
\begin{definition}[Cluster-Wise Error Metrics]
    Given a cluster $c$ and a record-wise error metric $\texttt{E}$, we extend $\texttt{E}$ to clusters by defining $\texttt{E}(c) = \frac{1}{\lvert c \rvert} \sum_{r \in c} \texttt{E}(r)$, the average of record-wise error metrics within the cluster. Specifically, we define the cluster-wise error metrics
    $$
        \texttt{OCE}(c) = \frac{1}{\lvert c \rvert} \sum_{r \in c} \texttt{OCE}(r), \quad \texttt{UCE}(c) = \frac{1}{\lvert c \rvert} \sum_{r \in c} \texttt{UCE}(r), \quad \texttt{SDE}(c) = \frac{1}{\lvert c \rvert} \sum_{r \in c} \texttt{SDE}(r),
    $$
    and
    $$
        \texttt{ROCE}(c) = \frac{1}{\lvert c \rvert} \sum_{r \in c} \texttt{ROCE}(r), \quad \texttt{RUCE}(c) = \frac{1}{\lvert c \rvert} \sum_{r \in c} \texttt{RUCE}(r), \quad \texttt{EI}(c) = \frac{1}{\lvert c \rvert} \sum_{r \in c} \texttt{EI}(r).
    $$
\end{definition}


\subsubsection{Error Auditing and Statistical Analyses}\label{sec:statistical-analyses}

To understand errors and their causes, we consider cluster-wise error metrics, looking at the characteristics of individual errors, and analyzing their relationship with features of interest. We choose to focus on errors at the cluster level rather than at the record level since this can be more interpretable in some applications, including for inventor disambiguation tasks. Here, the clusters represent invidivual inventors, and errors for disambiguating a given inventor can be understood in view of inventor characteristics.

To illustrate, for PatentsView's inventor disambiguation, we consider two analyses based on cluster-wise error metrics.
    First, we consider marginal performance disparity between imputed {inventors'} ethnicities by computing performance metric estimators (see section \ref{sec:performance_estimation}) within subgroups. We visualize the results by adapting the performance bias module of the  Deepchecks Python package \citep{Chorev_Deepchecks_A_Library} to include uncertainty quantification. Here, ethnicity is imputed using the Ethnicolr Python package \citep{Laohaprapanon_ethnicolr_Predict_Race_2022} with a model trained on the 2010 Census Surname Files. These types of models have inaccuracies,  
    but they can help uncover failure modes that would otherwise remain hidden \citep{Jain2022Importance}.

    Second, to audit and classify errors, we manually review the list of errors, identify meaningful categories, and report error rates across the categories (see Figure \ref{fig:error-auditing}). The manual review is assisted by a Streamlit web app \citep{streamlit} to browse disambiguated clusters, visualize the differences between predicted and ground truth clusters, and log error tags (see Figure \ref{fig:streamlit}). Specifically, for a given disambiguated inventor, the user is shown a scatterplot of inventor mentions organized by membership to predicted clusters on the vertical axis versus true clusters on the horizontal axis. All inventor mentions associated with predicted clusters that intersect the true cluster are shown, ensuring that both overclustering and underclustering errors can be visualized. Hovering over an inventor mention's data point shows related information, including stated name, assignee, location, patent title, patent grant date, and co-author last names. Additionally, a raw data table can be explored, sorted, and searched, in order to analyze errors in more details. For each disambiguated inventor in a review sample, overclustering and underclustering errors are tagged if present according to a potential cause for error.

\begin{figure}[ht]
    \centering
    \includegraphics[height=5in]{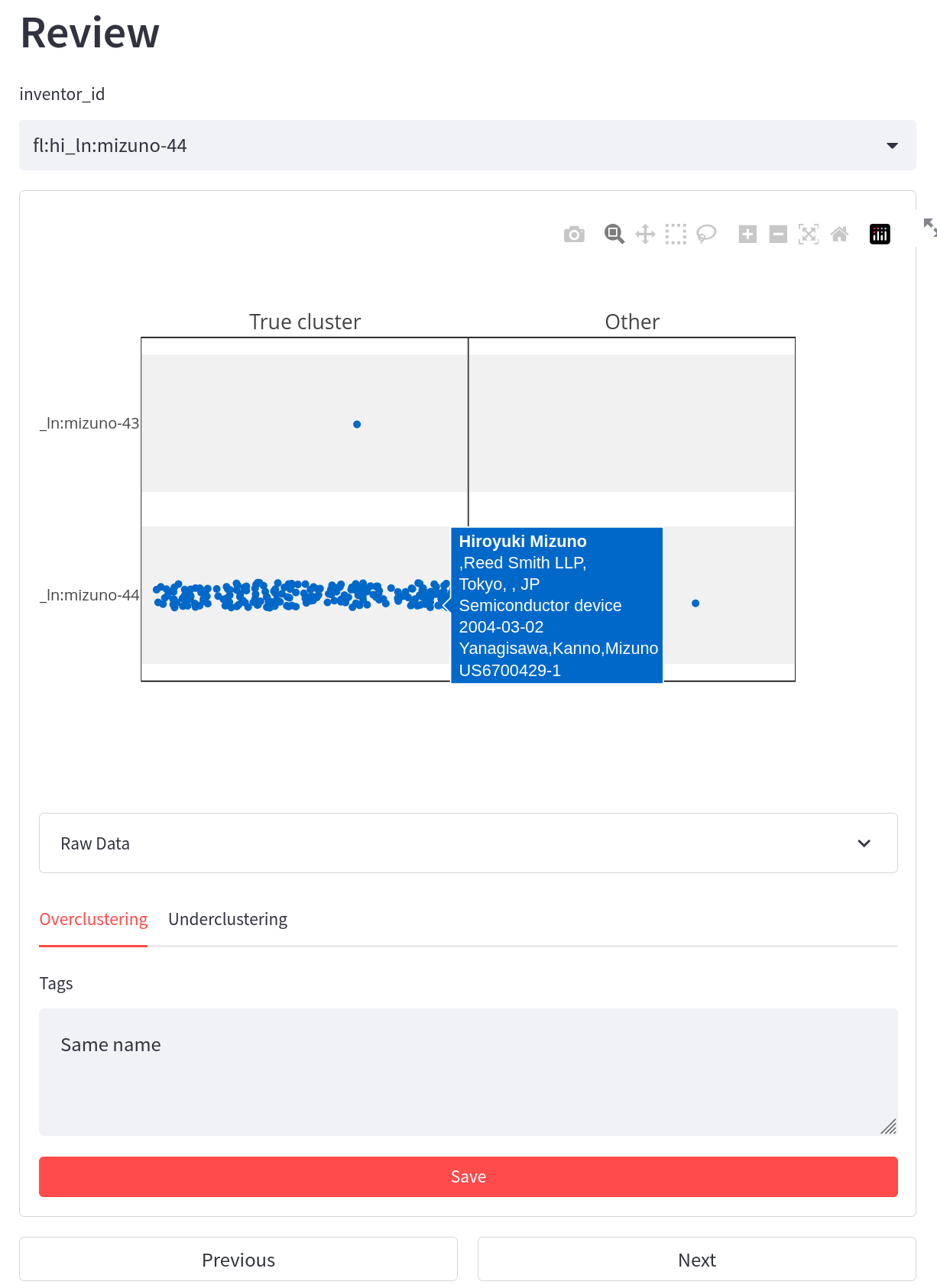}
    \caption{Screenshot of the Streamlit app used for clerical error review. The ``inventor\_id'' field at the top of the page selects a ground truth cluster whose label is derived from the predicted cluster used as a starting point. The table below shows how this ground truth cluster (first horizontal column) relates to predicted clusters it intersects with on the x axis, with each point representing an inventor mention. Observations regarding overclustering and underclustering errors are recorded below.}
    \label{fig:streamlit}
\end{figure}

\begin{remark}
    Additional error analysis techniques can be relevant in some applications. For instance, decision tree classification models can be used to relate cluster error metrics with cluster features, as done in the SliceFinder algorithm \citep{Chung2019}. The ER-Evaluation Python package \citep{Binette2023joss} implements decision tree fitting and visualization tools for this purpose.
\end{remark}
\begin{remark}
Defining performance metrics for subgroups requires some care. Suppose that a subset of records $\mathcal{R}' \subset \mathcal{R}$ is associated with a group of entities of interest, such as inventors of a given ethnicity. Naively, it may be tempting to restrict both $\hat{\mathcal{C}}$ and $\mathcal{C}$ to $\mathcal{R}'$, before computing cluster-wise error metrics and estimators. This is not the correct approach, as this is blind to errors in predicted links between records in $\mathcal{R}'$ and records not in $\mathcal{R}'$, and will artificially inflate performance metrics estimates. The correct approach is to first identify entities $\mathcal{C'}\subset \mathcal{C}$ corresponding to the subgroup of interest. Then, cluster-wise error metrics can be computed for sampled clusters that fall within $\mathcal{C'}$, and performance estimates can be obtained by restricting the sample to this subset. For example, with pairwise precision, this corresponds to estimating the ratio of the number of pairwise links within clusters in $\mathcal{C}'$ (true links) to the number of links in $\hat{\mathcal{C}}$ that intersect $\mathcal{C}'$ (predicted links involving $\mathcal{C}'$).
\end{remark}

\subsection{Performance Metric Estimation}\label{sec:performance_estimation}

We now turn to the problem of estimating performance evaluation metrics based on benchmark (i.e., labeled) data sets. We assume that the benchmark data sets take the form of a probability sample of true clusters $C_S = (c_1, \dots, c_k)$. We denote by $p_c, c\in \mathcal{C}$, the per-instance sampling probabilities, up to a global normalizing constant.

Many performance metrics commonly used for entity resolution (pairwise, b-cubed, and cluster metrics) can be expressed in terms of the overclustering ($\texttt{OCE})$ and underclustering ($\texttt{UCE}$) error metrics defined in \eqref{eq:def_A_r}, \eqref{eq:def_B_r}, and section \ref{sec:error_analysis}, together with functions of the predicted clustering $\hat{\mathcal{C}}$. This representation has the advantage of directly relating the data labeling process from section \ref{sec:data_labeling} to performance evaluation metrics. As such, labeling uncertainty can be propagated to the estimation of performance evaluation metrics. Furthermore, this representation disaggregates performance evaluation metrics in terms of cluster-level performance, allowing fine-grained error analysis as shown in section \ref{sec:error_analysis}. Finally, this representation provides a unified framework for performance estimation in terms of cluster-wise error rates, allowing for efficient computation of all metrics and estimators from a single table containing overclustering and underclustering error metrics. The framework can be extended to the estimation of additional metrics through the use or specification of appropriate record-level error metrics, as shown in section \ref{sec:extension_example}.

\subsubsection{Representation Lemmas}\label{sec:representation-lemmas}

We now provide the expressions for performance metrics that we use to derive estimators.

\paragraph{Pairwise Precision and Recall}

Let $\mathcal{P}$ be the set of pairs of elements belonging to the same cluster in $\widehat{\mathcal{C}}$ (predicted pairs) and let $\mathcal{T}$ be the set of pairs of elements belonging to the same cluster in $\mathcal{C}$ (true pairs). Precision $P$ and recall $R$ are defined as
\begin{equation}
    P = \frac{\lvert \mathcal{T} \cap \mathcal{P} \rvert}{\lvert \mathcal{P} \rvert}, \quad R = \frac{\lvert \mathcal{T} \cap \mathcal{P} \rvert}{\lvert \mathcal{T} \rvert}.
\end{equation}

Lemma \ref{lemma:pairwise_precision_recall} expresses precision and recall as ratios 
involving the error metrics defined in section \ref{sec:error_space_definition}.   

\begin{lemma}\label{lemma:pairwise_precision_recall}
    Suppose we sample one cluster $c$ from $\mathcal{C}$ at random. Let $p_c >0$ be proportional to its sampling probability. Then
    \begin{equation}
        P = \frac{\mathbb{E}\left[ \lvert c \rvert ( \lvert c \rvert -1 - \texttt{UCE}(c)) / p_c \right]}{\mathbb{E}\left[\lvert c \rvert (\lvert c \rvert - 1 + \texttt{SDE}(c)) / p_c\right]}, \quad R = \frac{\mathbb{E}\left[ \lvert c \rvert ( \lvert c \rvert -1 - \texttt{UCE}(c)) / p_c \right]}{\mathbb{E}\left[\lvert c \rvert (\lvert c \rvert - 1) / p_c\right]}.  
    \end{equation}
\end{lemma}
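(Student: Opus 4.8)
The plan is to write each of the three pair counts $\lvert\mathcal{T}\cap\mathcal{P}\rvert$, $\lvert\mathcal{P}\rvert$, and $\lvert\mathcal{T}\rvert$ as a sum over the true clusters $c\in\mathcal{C}$ of an explicit cluster-level quantity, and then to recognize each such sum as an unnormalized expectation under the sampling law in which $c$ is drawn with probability $p_c/\sum_{c'\in\mathcal{C}}p_{c'}$. Since the three resulting expectations carry the same normalizing constant $Z:=\sum_{c'\in\mathcal{C}}p_{c'}$, it cancels in the ratios that define $P$ and $R$, giving the stated formulas. (The hypothesis $p_c>0$ for all $c$ is exactly what makes the division by $p_c$ legitimate.)

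First, $\lvert\mathcal{T}\rvert$ is immediate: $\lvert\mathcal{T}\rvert=\sum_{c\in\mathcal{C}}\binom{\lvert c\rvert}{2}=\tfrac12\sum_{c\in\mathcal{C}}\lvert c\rvert(\lvert c\rvert-1)$. For $\lvert\mathcal{T}\cap\mathcal{P}\rvert$, I would group the shared pairs by their common true cluster. Fix $r$ in a true cluster $c$; a record $r'\in c$ is predicted coreferent with $r$ iff $r'\in\hat c(r)$, and $\lvert c\cap\hat c(r)\rvert=\lvert c\rvert-\lvert c\setminus\hat c(r)\rvert=\lvert c\rvert-\texttt{UCE}(r)$ since $c\setminus\hat c(r)=B_r$. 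Hence $r$ forms a shared pair with $\lvert c\rvert-1-\texttt{UCE}(r)$ other records of $c$; summing over $r\in c$, dividing by two to pass from ordered to unordered pairs, and using $\sum_{r\in c}\texttt{UCE}(r)=\lvert c\rvert\,\texttt{UCE}(c)$ gives the contribution $\tfrac{\lvert c\rvert}{2}(\lvert c\rvert-1-\texttt{UCE}(c))$, so $\lvert\mathcal{T}\cap\mathcal{P}\rvert=\tfrac12\sum_{c}\lvert c\rvert(\lvert c\rvert-1-\texttt{UCE}(c))$. For $\lvert\mathcal{P}\rvert$ I would start from the identity $2\lvert\mathcal{P}\rvert=\sum_{r\in\mathcal{R}}(\lvert\hat c(r)\rvert-1)$ and regroup by true cluster: for $r\in c$ write $\lvert\hat c(r)\rvert-1=(\lvert c\rvert-1)+\texttt{SDE}(r)$, so $\sum_{r\in c}(\lvert\hat c(r)\rvert-1)=\lvert c\rvert(\lvert c\rvert-1+\texttt{SDE}(c))$ and $\lvert\mathcal{P}\rvert=\tfrac12\sum_{c}\lvert c\rvert(\lvert c\rvert-1+\texttt{SDE}(c))$.

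Finally, for any cluster functional $g$ the sampling law gives $\mathbb{E}[g(c)/p_c]=\sum_{c\in\mathcal{C}}\frac{g(c)}{p_c}\cdot\frac{p_c}{Z}=\tfrac1Z\sum_{c\in\mathcal{C}}g(c)$. Applying this with $g(c)=\lvert c\rvert(\lvert c\rvert-1-\texttt{UCE}(c))$, with $g(c)=\lvert c\rvert(\lvert c\rvert-1+\texttt{SDE}(c))$, and with $g(c)=\lvert c\rvert(\lvert c\rvert-1)$ shows that the three expectations appearing in the lemma equal $2/Z$ times $\lvert\mathcal{T}\cap\mathcal{P}\rvert$, $\lvert\mathcal{P}\rvert$, and $\lvert\mathcal{T}\rvert$ respectively; the common factor $2/Z$ cancels in the ratios, yielding $P=\lvert\mathcal{T}\cap\mathcal{P}\rvert/\lvert\mathcal{P}\rvert$ and $R=\lvert\mathcal{T}\cap\mathcal{P}\rvert/\lvert\mathcal{T}\rvert$ in the claimed form. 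The only delicate part is the combinatorial bookkeeping — consistently handling ordered versus unordered pairs and correctly translating $\lvert c\cap\hat c(r)\rvert$ and $\lvert\hat c(r)\rvert$ into $\texttt{UCE}(r)$ and $\texttt{SDE}(r)$ via $c\setminus\hat c(r)=B_r$ and $\lvert\hat c(r)\rvert-\lvert c\rvert=\texttt{SDE}(r)$; everything else is routine algebra.
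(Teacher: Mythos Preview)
Your proposal is correct and follows essentially the same approach as the paper: expressing $\lvert\mathcal{P}\rvert$, $\lvert\mathcal{T}\cap\mathcal{P}\rvert$, and $\lvert\mathcal{T}\rvert$ as sums over $c\in\mathcal{C}$ of the cluster-level quantities $\lvert c\rvert(\lvert c\rvert-1+\texttt{SDE}(c))$, $\lvert c\rvert(\lvert c\rvert-1-\texttt{UCE}(c))$, and $\lvert c\rvert(\lvert c\rvert-1)$, then rewriting each sum as an expectation over the sampling law. Your treatment of the normalizing constant $Z$ and the ordered-versus-unordered pair bookkeeping is in fact slightly more explicit than the paper's.
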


\paragraph{Pairwise F-Score}

Let $F_\beta$, $\beta > 0$, be the weighted harmonic mean between precision and recall, namely
\begin{equation}
    F_\beta = \left(\frac{P^{-1} + \beta^2 R^{-1}}{1+\beta^{2}}\right)^{-1}.
\end{equation}

Lemma \ref{lemma:f-score} provides an expression for $F_\beta$ in terms of our error metrics.

\begin{lemma}\label{lemma:f-score}
     Suppose we sample one cluster $c$ from $\mathcal{C}$ at random. Let $p_c >0$ be proportional to its sampling probability. Then
    \begin{equation}
        F_\beta = \frac{\mathbb{E}\left[\lvert c \rvert \left( \lvert c \rvert -1 - \texttt{UCE}(c)\right) / p_c \right]}{\mathbb{E}\left[\lvert c \rvert \left(\lvert c \rvert - 1 + \tfrac{1}{1+\beta^2}\texttt{SDE}(c)\right) / p_c\right]}.
    \end{equation}
\end{lemma}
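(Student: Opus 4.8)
The plan is to reduce Lemma \ref{lemma:f-score} to Lemma \ref{lemma:pairwise_precision_recall} by pure algebra, exploiting the fact that the representations of pairwise precision and recall share a common numerator. First I would introduce shorthand for the three expectations appearing in Lemma \ref{lemma:pairwise_precision_recall}: write $n = \mathbb{E}[\lvert c \rvert(\lvert c \rvert-1-\texttt{UCE}(c))/p_c]$ for the shared numerator, $d_P = \mathbb{E}[\lvert c \rvert(\lvert c \rvert-1+\texttt{SDE}(c))/p_c]$ for the precision denominator, and $d_R = \mathbb{E}[\lvert c \rvert(\lvert c \rvert-1)/p_c]$ for the recall denominator, so that $P = n/d_P$ and $R = n/d_R$ by the previous lemma.

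Next I would substitute $P^{-1} = d_P/n$ and $R^{-1} = d_R/n$ into the definition $F_\beta = \bigl((P^{-1}+\beta^2 R^{-1})/(1+\beta^2)\bigr)^{-1}$, which collapses to $F_\beta = (1+\beta^2)\,n/(d_P + \beta^2 d_R)$. The key step is then to recombine $d_P + \beta^2 d_R$ into a single expectation by linearity: since both terms are expectations against the same cluster-sampling distribution, $d_P + \beta^2 d_R = \mathbb{E}[\lvert c \rvert((1+\beta^2)(\lvert c \rvert-1) + \texttt{SDE}(c))/p_c]$. Dividing the numerator and denominator of $F_\beta$ by $1+\beta^2$ then yields exactly the claimed formula, with the factor $\tfrac{1}{1+\beta^2}$ multiplying $\texttt{SDE}(c)$.

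Since every quantity is a finite sum over the finitely many clusters $c \in \mathcal{C}$, there are no convergence concerns, and the only thing to check carefully is that the numerators in the $P$ and $R$ expressions of Lemma \ref{lemma:pairwise_precision_recall} are literally identical, so that writing both ratios over the common $n$ is legitimate — which they are. I would also note in passing that $p_c > 0$ together with the presence of at least one true cluster of size $\geq 2$ guarantees $n, d_P, d_R > 0$, so all the reciprocals and the final ratio are well defined. In short, the ``main obstacle'' here is purely bookkeeping: no idea beyond Lemma \ref{lemma:pairwise_precision_recall} and the linearity of expectation is required.
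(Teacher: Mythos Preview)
Your proposal is correct and follows essentially the same route as the paper: both arguments rewrite $F_\beta$ as $(1+\beta^2)\lvert\mathcal{T}\cap\mathcal{P}\rvert/(\lvert\mathcal{P}\rvert+\beta^2\lvert\mathcal{T}\rvert)$ and then combine the denominator into a single sum/expectation before dividing through by $1+\beta^2$. The only cosmetic difference is that the paper substitutes the intermediate sum expressions for $\lvert\mathcal{P}\rvert$, $\lvert\mathcal{T}\rvert$, $\lvert\mathcal{T}\cap\mathcal{P}\rvert$ derived inside the proof of Lemma~\ref{lemma:pairwise_precision_recall}, whereas you invoke the expectation form of that lemma's statement directly; the algebra is otherwise identical.
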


\paragraph{Cluster Precision and Recall}

Following \cite{Menestrina2010}, we define cluster precision $cP$, cluster recall $cR$, and cluster $F$-score as
\begin{equation}
    cP = \frac{\lvert \mathcal{C} \cap \hat{\mathcal{C}} \rvert}{ \lvert \hat{\mathcal{C}} \rvert},\quad cR = \frac{\lvert \mathcal{C} \cap \hat{\mathcal{C}} \rvert}{ \lvert {\mathcal{C}} \rvert}, \quad cF_\beta = \left(\frac{cP^{-1} + \beta^2 cR^{-1}}{1+\beta^{2}}\right)^{-1}.
\end{equation}
That is, $cP$ is the proportion of correctly predicted clusters among all predicted clusters, and $cR$ is the proportion of correctly predicted clusters among all true clusters.

\begin{lemma}\label{lemma:cluster_metrics}
           Suppose we sample one cluster $c$ from $\mathcal{C}$ at random. Let $p_c >0$ be proportional to its sampling probability. Then
    \begin{equation}
        cP = \frac{N \mathbb{E}\left[ \texttt{EI}(c)/ p_c \right]}{\lvert \hat{\mathcal{C}} \rvert \mathbb{E}\left[ \lvert c \rvert /p_c \right]}, \quad cR = \frac{\mathbb{E}\left[\texttt{EI}(c) / p_c \right]}{\mathbb{E}\left[ 1 / p_c \right]}, \quad cF_\beta = \frac{\mathbb{E}\left[ N(1+\beta^2) \texttt{EI}(c)/ p_c \right]}{\mathbb{E}\left[ (N \beta^2  + \lvert \hat{\mathcal{C}} \rvert\lvert c \rvert)/ p_c \right]}.
    \end{equation}
\end{lemma}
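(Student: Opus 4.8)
The plan is to derive all three identities from a single sampling identity together with one combinatorial observation, after which each formula drops out by elementary algebra. Writing $Z = \sum_{c \in \mathcal{C}} p_c$ for the (unknown) normalizing constant, the single sampled cluster satisfies $\mathbb{P}(c = c_0) = p_{c_0}/Z$, so for every functional $g$ on clusters one has the Horvitz--Thompson-type identity $\mathbb{E}[g(c)/p_c] = \tfrac{1}{Z}\sum_{c \in \mathcal{C}} g(c)$. Specializing to $g \equiv 1$ and to $g(c) = \lvert c \rvert$ gives $\mathbb{E}[1/p_c] = \lvert \mathcal{C}\rvert / Z$ and, because $\mathcal{C}$ partitions $\mathcal{R}$, $\mathbb{E}[\lvert c\rvert / p_c] = N/Z$. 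The second of these is the key structural fact: it says $Z = N / \mathbb{E}[\lvert c \rvert / p_c]$, i.e.\ the unknown constant is itself recoverable from an estimable expectation. Since $cP$ and $cF_\beta$ are invariant under rescaling of the $p_c$'s, they cannot be written as a single expectation and must appear as ratios, and this identity is exactly what dictates the correct denominator to use.

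The one nontrivial ingredient is the observation that the cluster-level error indicator collapses: for a true cluster $c$, $\texttt{EI}(c) = \mathbb{I}(c \notin \hat{\mathcal{C}})$, so that $1 - \texttt{EI}(c)$ is simply the indicator that $c$ is reproduced exactly as a block of $\hat{\mathcal{C}}$. To see this, note that for $r \in c$ we have $c(r) = c$, hence $\texttt{EI}(r) = \mathbb{I}(\hat c(r) \neq c)$; and if $\hat c(r) = c$ for even one $r \in c$, then every $r' \in c$ lies in that same predicted cluster, forcing $\hat c(r') = c$ for all of them. Thus $\texttt{EI}(r)$ is constant over $r \in c$, equal to $0$ precisely when $c \in \hat{\mathcal{C}}$, and averaging over $r \in c$ yields the claim. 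Summing over clusters, $\sum_{c \in \mathcal{C}}\bigl(1 - \texttt{EI}(c)\bigr) = \lvert \mathcal{C} \cap \hat{\mathcal{C}} \rvert$. (Consequently the numerators in the lemma are to be read with the correctness indicator $1 - \texttt{EI}(c)$ in place of $\texttt{EI}(c)$.)

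It then remains to rewrite the targets in closed form and substitute. Putting $m = \lvert \mathcal{C} \cap \hat{\mathcal{C}}\rvert$, the definitions give $cP = m/\lvert \hat{\mathcal{C}}\rvert$ and $cR = m/\lvert \mathcal{C}\rvert$ directly, while a short harmonic-mean computation gives $cF_\beta = (1+\beta^2)\,m \,/\, (\beta^2 \lvert \mathcal{C}\rvert + \lvert \hat{\mathcal{C}}\rvert)$. Now $\mathbb{E}[(1-\texttt{EI}(c))/p_c] = m/Z$ and $\mathbb{E}[1/p_c] = \lvert \mathcal{C}\rvert/Z$, whose ratio is $cR$; multiplying the first by $Z = N/\mathbb{E}[\lvert c\rvert/p_c]$ and dividing by $\lvert \hat{\mathcal{C}}\rvert$ gives $cP$; and for $cF_\beta$ one uses $\sum_{c}\bigl(N\beta^2 + \lvert \hat{\mathcal{C}}\rvert \lvert c\rvert\bigr) = N(\beta^2\lvert \mathcal{C}\rvert + \lvert \hat{\mathcal{C}}\rvert)$, so that $\mathbb{E}[(N\beta^2 + \lvert \hat{\mathcal{C}}\rvert\lvert c\rvert)/p_c] = \tfrac{N}{Z}(\beta^2\lvert \mathcal{C}\rvert + \lvert \hat{\mathcal{C}}\rvert)$, whose reciprocal times $\mathbb{E}[N(1+\beta^2)(1-\texttt{EI}(c))/p_c] = \tfrac{N(1+\beta^2)}{Z}\,m$ is exactly $cF_\beta$. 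The algebra here is routine; the only places demanding care are the collapse of $\texttt{EI}(c)$ in the previous paragraph and the bookkeeping of the normalizing constant $Z$, which is what makes the shapes of the $cP$ and $cF_\beta$ expressions (with their $N$ and $\lvert \hat{\mathcal{C}}\rvert$ factors) non-obvious rather than the derivation itself being hard.
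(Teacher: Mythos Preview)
Your proof is correct and follows essentially the same route as the paper's: both rely on the self-normalized importance sampling identity $\mathbb{E}[g(c)/p_c] \propto \sum_{c \in \mathcal{C}} g(c)$, specialize it to $g \equiv 1$, $g(c) = \lvert c \rvert$, and $g(c) = \texttt{EI}(c)$, and then assemble the three ratios. Your handling of the normalizing constant $Z$ is equivalent to the paper's use of the weights $w_c = \lvert \mathcal{C} \rvert^{-1}/\tilde p_c$.

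You are also right to flag the sign of $\texttt{EI}$. The paper's own proof writes $\lvert \mathcal{C} \cap \hat{\mathcal{C}}\rvert = \sum_{c \in \mathcal{C}} \texttt{EI}(c)$, which treats $\texttt{EI}(c)$ as the indicator that $c$ is correctly recovered; this is inconsistent with the paper's earlier definition of $\texttt{EI}(r)$ as an \emph{error} indicator (equal to $1$ when $\hat c(r) \neq c(r)$). So the lemma and its proof in the paper are internally consistent but disagree with the stated definition of $\texttt{EI}$; your parenthetical that the numerators should be read with $1 - \texttt{EI}(c)$ is the correct fix under the paper's definition. Your collapse argument showing $\texttt{EI}(c) \in \{0,1\}$ is a nice clarification that the paper leaves implicit.
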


\paragraph{B-Cubed Precision and Recall}

The b-cubed (or $B^3$) precision and recall \citep{bagga1998algorithms}, here placing equal weight on each ground truth cluster, are defined as
\begin{equation}
    P_{{B^3}} = \frac{1}{\lvert \mathcal{C}\rvert}\sum_{c \in \mathcal{C}} \frac{1}{\lvert c \rvert} \sum_{r \in c} \frac{\lvert c(r) \cap \hat c(r)\rvert }{\lvert \hat c(r) \rvert}, \quad R_{{B^3}} = \frac{1}{\lvert \mathcal{C}\rvert}\sum_{c \in \mathcal{C}} \frac{1}{\lvert c \rvert} \sum_{r \in c} \frac{\lvert c(r) \cap \hat c(r)\rvert }{\lvert c(r) \rvert}.
\end{equation}

Lemma \ref{lemma:b_cubed_metrics} expresses the b-cubed metrics in terms of the relative expected numbers of missing or extraneous links.

\begin{lemma}\label{lemma:b_cubed_metrics}
        Suppose we sample one cluster $c$ from $\mathcal{C}$ at random. Let $p_c >0$ be proportional to its sampling probability. Then
    \begin{equation}
        P_{{B^3}} =\frac{\mathbb{E}\left[ (1-\texttt{ROCE}(c)) / p_c \right]}{\mathbb{E}\left[ 1 / p_c \right]}, \quad R_{{B^3}} = \frac{\mathbb{E}\left[ (1-\texttt{RUCE}(c)) / p_c \right]}{\mathbb{E}\left[ 1 / p_c \right]}.
    \end{equation}
\end{lemma}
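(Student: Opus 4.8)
The plan is to first rewrite the inner summand of each B-cubed metric in terms of the record-wise relative errors, then pass from the cluster average to the cluster-wise metric by definition, and finally convert the resulting average over $\mathcal{C}$ into the stated ratio of expectations.

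First I would observe that for any record $r\in c$ we have $c(r)=c$, and crucially $r$ itself lies in both $c(r)$ and $\hat c(r)$, so $c(r)\cap\hat c(r)\neq\emptyset$ and, more usefully, $c(r)\cap\hat c(r)=\hat c(r)\setminus(\hat c(r)\setminus c(r))=\hat c(r)\setminus A_r$. Hence $\lvert c(r)\cap\hat c(r)\rvert=\lvert\hat c(r)\rvert-\texttt{OCE}(r)$, which gives
\begin{equation}
\frac{\lvert c(r)\cap\hat c(r)\rvert}{\lvert\hat c(r)\rvert}=1-\frac{\texttt{OCE}(r)}{\lvert\hat c(r)\rvert}=1-\texttt{ROCE}(r).
\end{equation}
Symmetrically, $c(r)\cap\hat c(r)=c(r)\setminus(c(r)\setminus\hat c(r))=c(r)\setminus B_r$, so $\lvert c(r)\cap\hat c(r)\rvert=\lvert c(r)\rvert-\texttt{UCE}(r)$ and $\lvert c(r)\cap\hat c(r)\rvert/\lvert c(r)\rvert=1-\texttt{RUCE}(r)$. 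Substituting these identities into the definitions of $P_{B^3}$ and $R_{B^3}$ and using the defining relation $\texttt{ROCE}(c)=\frac{1}{\lvert c\rvert}\sum_{r\in c}\texttt{ROCE}(r)$ (and likewise for $\texttt{RUCE}$) collapses the double sum to
\begin{equation}
P_{B^3}=\frac{1}{\lvert\mathcal{C}\rvert}\sum_{c\in\mathcal{C}}\bigl(1-\texttt{ROCE}(c)\bigr),\qquad R_{B^3}=\frac{1}{\lvert\mathcal{C}\rvert}\sum_{c\in\mathcal{C}}\bigl(1-\texttt{RUCE}(c)\bigr).
\end{equation}

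Finally I would convert each unweighted cluster average into the ratio of expectations. Writing $Z=\sum_{c\in\mathcal{C}}p_c$ for the normalizing constant, a cluster sampled with probability $p_c/Z$ satisfies $\mathbb{E}[g(c)/p_c]=\frac{1}{Z}\sum_{c\in\mathcal{C}}g(c)$ for any $g$; in particular $\mathbb{E}[1/p_c]=\lvert\mathcal{C}\rvert/Z$ and $\mathbb{E}[(1-\texttt{ROCE}(c))/p_c]=\frac{1}{Z}\sum_{c\in\mathcal{C}}(1-\texttt{ROCE}(c))$. Dividing these two quantities cancels $Z$ and recovers the displayed formula for $P_{B^3}$; the same argument with $\texttt{RUCE}$ in place of $\texttt{ROCE}$ yields $R_{B^3}$.

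There is no serious obstacle here: the only point requiring care is the elementary but essential fact that $r\in c(r)\cap\hat c(r)$, which is what makes the intersection cardinalities decompose cleanly as $\lvert\hat c(r)\rvert-\texttt{OCE}(r)$ and $\lvert c(r)\rvert-\texttt{UCE}(r)$; the rest is bookkeeping with the definition of the cluster-wise metrics and the standard ratio-of-expectations identity already used for the other representation lemmas.
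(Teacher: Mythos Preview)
Your proposal is correct and is precisely the approach the paper has in mind: the paper's proof is a single sentence stating that the result ``follows directly from definitions when using self-normalized importance sampling representations as above,'' and you have simply written out those definitions and that representation explicitly. One minor remark: the fact that $r\in c(r)\cap\hat c(r)$, while true, is not actually needed for the cardinality identity $\lvert c(r)\cap\hat c(r)\rvert=\lvert\hat c(r)\rvert-\texttt{OCE}(r)$, since $A_r\subset\hat c(r)$ and $A_r$ is disjoint from $c(r)\cap\hat c(r)$ purely by definition of $A_r=\hat c(r)\setminus c(r)$.
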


\subsubsection{Performance Estimators}\label{sec:estimators}

All of the expressions in section \ref{sec:representation-lemmas} are of the form, 
\begin{equation}\label{eq:theta}
    \theta = {\mathbb{E}[f(c)]} \big / {\mathbb{E}[g(c)]},
\end{equation}
for two functions $f$ and $g$, and where the expectations are taken with respect to sampling the random cluster $c$. Under the assumption that $c_1,  \dots, c_k$ are sampled with replacement, we can estimate $\mathbb{E}[f(c)]$  and $\mathbb{E}[g(c)]$ 
using the empirical averages $\bar{f}_k$ and $\bar{g}_k$, respectively, where 
\begin{align}\label{eq:estimatefg}
\bar{f}_k = \frac{1}{k} \sum_{i=1}^{k} f(c_i), \;\;\; 
\bar{g}_k = \frac{1}{k} \sum_{i=1}^{k} g(c_i).
\end{align}
We take the ratio of these averages, and thus obtain a ratio etimator for $\theta$. We make further adjustments to reduce bias in the ratio estimator and its variance estimator using the approach described in \cite{Binette2022b}. That is, our estimate of quantities of the form \eqref{eq:theta}, given samples clusters $c_1, \dots, c_k$, is
\begin{align}\label{eq:bias_adjustment}
    \hat{\theta} = \frac{\bar{f}_k}{\bar{g}_k} \left\{ 1 + \frac{1}{k(k-1)} \sum_{i=1}^k \frac{g(c_i)}{\bar{g}_k} \left(\frac{f(c_i)}{\bar{f}_k} - \frac{g(c_i)}{\bar{g}_k} \right) \right\}.
\end{align}
Our variance estimate is
\begin{align} \label{eq:estimated_variance}
    \widehat{V}(\hat{\theta}) = \left( \frac{\bar{f}_k}{\bar{g}_k} \right)^2 \frac{1}{k(k-1)} \sum_{i=1}^k \left( \frac{g(c_i)}{\bar{g}_k} - \frac{f(c_i)}{\bar{f}_k}\right)^2.
\end{align}

\subsubsection{Example Extension to an Additional Metric}\label{sec:extension_example}

Our framework can be extended to estimate additional metrics. For example, consider the cluster homogeneity metric, defined as the normalized conditional entropy between the true and predicted clusterings \citep{rosenberg2007v}. That is, homogeneity is defined as
\begin{equation}\label{eq:lemma_5_1}
    h = 1- \frac{H(\mathcal{C} \mid \hat{\mathcal{C}})}{H(\mathcal{C})},
\end{equation}
where
\begin{equation}\label{eq:lemma_5_2}
    H(\mathcal{C} \mid \hat{\mathcal{C}}) = - \sum_{c\in \mathcal{C}} \sum_{\hat c \in \hat{\mathcal{C}}} \frac{\lvert c \cap \hat c \rvert}{N} \log \frac{\lvert c \cap \hat c \rvert}{\lvert \hat c \rvert}\quad \text{and}\quad H(\mathcal{C}) = - \sum_{c \in \mathcal{C}} \frac{\lvert c \rvert}{N} \log \frac{\lvert c \rvert}{N}.
\end{equation}
Define the record-wise error metric $\texttt{H}$ as
\begin{equation}\label{eq:lemma_5_3}
    \texttt{H}(r) = (\lvert \hat c(r) \rvert - \texttt{OCE}(r)) \log \frac{\lvert \hat c(r) \rvert - \texttt{OCE}(r)}{\lvert \hat c(r) \rvert}
\end{equation}
and, for a cluster $c\in \mathcal{C}$, define the cluster-wise variant $\texttt{H}(c) = \frac{1}{\lvert c \rvert} \sum_{r \in c} \texttt{H}(r)$.
\begin{lemma}\label{lemma:lemma_5}
    Suppose we sample one cluster $c$ from $\mathcal{C}$ at random with probability proportional to positive numbers $p_c > 0$, $c \in \mathcal{C}$. Then
    \begin{equation}\label{eq:homogeneity_lemma}
        h = 1 - \frac{\mathbb{E}\left[ \lvert c \rvert \texttt{H}(c) / p_c\right]}{\mathbb{E}\left[ \lvert c \rvert \log(\lvert c \rvert / N) /p_c\right]}.
    \end{equation}
\end{lemma}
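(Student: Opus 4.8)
The plan is to unfold both entropies in \eqref{eq:lemma_5_2} into sums indexed by individual records, recognize the resulting summands as the record-wise quantities appearing in the lemma, and then re-package those record sums first as cluster-wise averages and finally as ratios of sampling expectations, exactly in the spirit of the derivation of the $B^3$ metrics in Lemma \ref{lemma:b_cubed_metrics}. A preliminary observation I would use repeatedly: when a cluster is drawn with probability $p_c / \sum_{c'} p_{c'}$, one has $\mathbb{E}[\phi(c)/p_c] = (\sum_{c'} p_{c'})^{-1} \sum_{c \in \mathcal{C}} \phi(c)$ for any function $\phi$, so any expression of the form $\sum_c \tilde f(c) \big/ \sum_c \tilde g(c)$ immediately equals $\mathbb{E}[\tilde f(c)/p_c]\big/\mathbb{E}[\tilde g(c)/p_c]$ because the normalizing constant cancels.

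For the denominator, pulling $1/N$ out of \eqref{eq:lemma_5_2} gives $H(\mathcal{C}) = -\tfrac{1}{N}\sum_{c \in \mathcal{C}} \lvert c \rvert \log(\lvert c \rvert / N)$, hence $H(\mathcal{C}) = -\tfrac{1}{N}(\sum_{c'} p_{c'})\,\mathbb{E}[\lvert c \rvert \log(\lvert c \rvert / N)/p_c]$. For the numerator, the key step is to rewrite the double sum over cluster pairs $(c,\hat c)$ as a sum over records: every record $r$ lies in exactly one true cluster $c(r)$ and one predicted cluster $\hat c(r)$, and $\lvert c \cap \hat c \rvert = \#\{ r : c(r) = c,\ \hat c(r) = \hat c \}$, so the leading weight $\lvert c \cap \hat c \rvert/N$ can be distributed one unit per such record, yielding $H(\mathcal{C} \mid \hat{\mathcal{C}}) = -\tfrac{1}{N}\sum_{r \in \mathcal{R}} \log\frac{\lvert c(r) \cap \hat c(r)\rvert}{\lvert \hat c(r) \rvert}$. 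Using $\texttt{OCE}(r) = \lvert \hat c(r) \setminus c(r) \rvert$, so that $\lvert c(r) \cap \hat c(r)\rvert = \lvert \hat c(r)\rvert - \texttt{OCE}(r)$, the summand is the record-wise metric $\texttt{H}(r)$ of \eqref{eq:lemma_5_3}. Regrouping $\sum_{r \in \mathcal{R}} = \sum_{c \in \mathcal{C}} \sum_{r \in c}$ and invoking $\texttt{H}(c) = \tfrac{1}{\lvert c \rvert}\sum_{r \in c} \texttt{H}(r)$ then gives $H(\mathcal{C} \mid \hat{\mathcal{C}}) = -\tfrac{1}{N}\sum_{c} \lvert c \rvert \texttt{H}(c) = -\tfrac{1}{N}(\sum_{c'} p_{c'})\,\mathbb{E}[\lvert c \rvert \texttt{H}(c)/p_c]$.

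Dividing the two displays, the common factor $-\tfrac{1}{N}(\sum_{c'} p_{c'})$ cancels, so $H(\mathcal{C} \mid \hat{\mathcal{C}})/H(\mathcal{C})$ equals the ratio of expectations on the right of \eqref{eq:homogeneity_lemma}; substituting into $h = 1 - H(\mathcal{C} \mid \hat{\mathcal{C}})/H(\mathcal{C})$ finishes the proof. I expect the only genuine obstacle to be the bookkeeping in the record-wise unfolding of $H(\mathcal{C} \mid \hat{\mathcal{C}})$: one must check that the weight $\lvert c \cap \hat c \rvert$ is consumed precisely by the $\lvert c \cap \hat c \rvert$ records sitting in that intersection (and is not squared), and that the empty intersections simply contribute nothing under the standard $0\log 0 = 0$ convention. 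Everything else is the routine "sum of cluster quantities equals ratio of sampling expectations" manipulation used throughout section \ref{sec:representation-lemmas}; I would also note the mild nondegeneracy assumption $H(\mathcal{C}) \neq 0$ needed for the ratio to be well defined.
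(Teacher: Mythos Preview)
Your record-wise unfolding of $H(\mathcal{C}\mid\hat{\mathcal{C}})$ is correct: the leading weight $\lvert c\cap\hat c\rvert$ is absorbed one unit per record in the intersection, yielding
\[
H(\mathcal{C}\mid\hat{\mathcal{C}}) \;=\; -\frac{1}{N}\sum_{r\in\mathcal{R}}\log\frac{\lvert c(r)\cap\hat c(r)\rvert}{\lvert \hat c(r)\rvert}.
\]
The gap is in the very next sentence. The summand you obtain is the bare logarithm $\log\bigl((\lvert\hat c(r)\rvert-\texttt{OCE}(r))/\lvert\hat c(r)\rvert\bigr)$, whereas the record-wise metric in \eqref{eq:lemma_5_3} is $\texttt{H}(r)=\bigl(\lvert\hat c(r)\rvert-\texttt{OCE}(r)\bigr)\log\bigl((\lvert\hat c(r)\rvert-\texttt{OCE}(r))/\lvert\hat c(r)\rvert\bigr)$, which carries an \emph{additional} factor of $\lvert c(r)\cap\hat c(r)\rvert$. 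So the identification ``the summand is $\texttt{H}(r)$'' is false unless every intersection $c(r)\cap\hat c(r)$ is a singleton. Ironically this is precisely the squaring issue you flagged: if you regroup $\sum_{r}\texttt{H}(r)$ back over pairs $(c,\hat c)$ you get $\sum_{c,\hat c}\lvert c\cap\hat c\rvert^{2}\log\bigl(\lvert c\cap\hat c\rvert/\lvert\hat c\rvert\bigr)$, not the first power you need.

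This is not a patchable oversight in your argument; with \eqref{eq:lemma_5_3} as written, the identity \eqref{eq:homogeneity_lemma} fails in general. A quick check: take $N=3$, true clusters $\{1,2\}$ and $\{3\}$, and a single predicted cluster $\{1,2,3\}$. Then $H(\mathcal{C}\mid\hat{\mathcal{C}})=H(\mathcal{C})$ so $h=0$, but the right-hand side of \eqref{eq:homogeneity_lemma} works out to $1-\bigl(4\log\tfrac{2}{3}+\log\tfrac{1}{3}\bigr)\big/\bigl(2\log\tfrac{2}{3}+\log\tfrac{1}{3}\bigr)\approx -0.42$. What your derivation actually establishes is the version of the lemma in which $\texttt{H}(r)$ is redefined as the bare logarithm (drop the leading factor in \eqref{eq:lemma_5_3}); with that correction your argument is complete and valid. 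The paper's own proof consists only of the line ``this follows directly from the definitions,'' so there is no alternative route to compare against.
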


Similarly as before, expression \eqref{eq:homogeneity_lemma} can be used to define a ratio estimator of cluster homogeneity.

\section{Empirical Illustrations and Simulations}\label{sec:results}

In this section, we first showcase the application our evaluation framework to PatentsView's inventor disambiguations. We then present results of a simulation study to assess the accuracy of our performance metric estimators. 

Our data labeling was performed using the methodology described in section \ref{sec:data_labeling}, resulting in a set of 400 cluster samples representative of data up to December 31, 2022. This benchmark data set is a direct extension of the work of \cite{Binette2022b}, where some practical details of the data labeling process are explained in more detail.

\subsection{Summary Statistics and Quality Assurance}

Figure \ref{fig:summary} displays our summary statistics computed using    PatentsView's predicted inventor disambiguations $\hat{\mathcal{C}}$ as a function of time. For Hill numbers, we focus on $H_0$, the number of distinct cluster sizes, and $H_1$, the exponentiated Shannon entropy.  Figure \ref{fig:summary} also displays our 
 estimates of the summary statistics for $\mathcal{C}$, excluding $H_0$ and $H_1$, for disambiguations carried out on or before December 31, 2021 (black dotted line).\footnote{We cannot estimate the true value of summary statistics values for later disambiguations since our benchmark data set only covers  records up to December 31, 2021.}

\begin{figure}[ht]
    \centering
    \includegraphics[width=\linewidth]{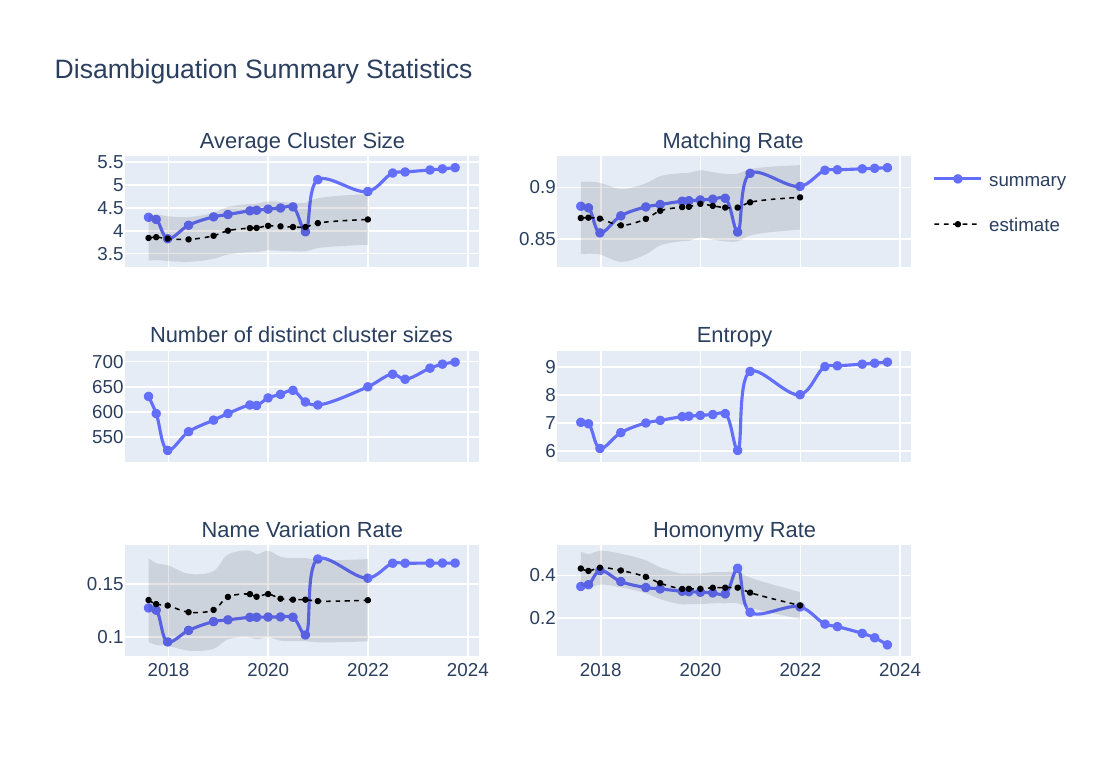}
    \caption{\textbf{Blue line:} Summary statistics for PatentsView's history of predicted disambiguations. \textbf{Black dotted line:} Estimates of the true value of the summary statistics, based on the 2022 inventors' benchmark data, with pointwise $95\%$ confidence intervals.}
    \label{fig:summary}
\end{figure}

Figure \ref{fig:summary} reveals several features of the evolution of these summary statistics and their estimates. First, the average cluster size, matching rate, {and name variation rate statistics computed with $\hat{\mathcal{C}}$ jump quite significantly around 2021, suggesting something unusual has happened in that time frame.} Second, the number of distinct cluster sizes is roughly monotonic, except for early disambiguation history and again around 2021. The monotonic trend is in line with expectations, as the number of distinct cluster sizes should increase as data are added over time; {the sudden break in the trend around 2021 is not.}  The homonymy rate statistic drops over time, going down to nearly 5\% by 2024, meaning that almost 95\% of inventor's names are assumed to be unique in the predicted disambiguation.
{The noticeable changes around 2021 coincide with a change to the disambiguation algorithm, apparently one that impacts the properties of  the clusterings.} We observe that summary statistics from $\hat{\mathcal{C}}$ are mostly within the 
confidence intervals for the corresponding quantities in $\mathcal{C}$. This suggests that the disambiguation algorithm generates clusterings with similar properties (as measured by these summary statistics) as the true clustering. Regardless, the rather significant changes in 2021 {should} motivate further investigation to ensure that the data still meet quality expectations.

One challenge with interpreting Figure \ref{fig:summary} is that both the data and algorithm change over time. It is possible to separate these two aspects by considering the evolution of summary statistics for a fixed subset of the data. In Figure \ref{fig:summary_restricted}, we consider inventor mentions from before August 2017 as a fixed data set {over time}. This is the largest data subset that was disambiguated at all available time points, allowing us to see the evolution of summary statistics over PatentsView's entire history.


The change patterns observed in Figure \ref{fig:summary} are accentuated in Figure \ref{fig:summary_restricted}.
For inventor mentions dating from before August 2017, the average cluster size and the homonymy rate from $\hat{\mathcal{C}}$ now fall outside of the $95\%$ confidence intervals,
even though that was quite not the case in Figure \ref{fig:summary}. 
Evidently, the quality of the pre-2017 data disambiguation has been affected by changes to the algorithm over the period. 
In fact, these changes were made to account for the significant amount of new data incorporated in the years between 2017 and 2021.  This observation highlights the importance of considering the effect of change both in algorithms and in amount of data when assessing disambiguation quality. Indeed, as we have previously noted, the difficulty of entity resolution problems is not constant across data sizes; rather, the opportunity for errors grows quadratically as a function of data size. Changes made to account for growing data, and specifically to rebalance precision and recall, since false match errors increase the fastest, will necessarily affect the characteristics of the disambiguation of data subsets. 

\begin{figure}[ht]
    \centering
    \includegraphics[width=\linewidth]{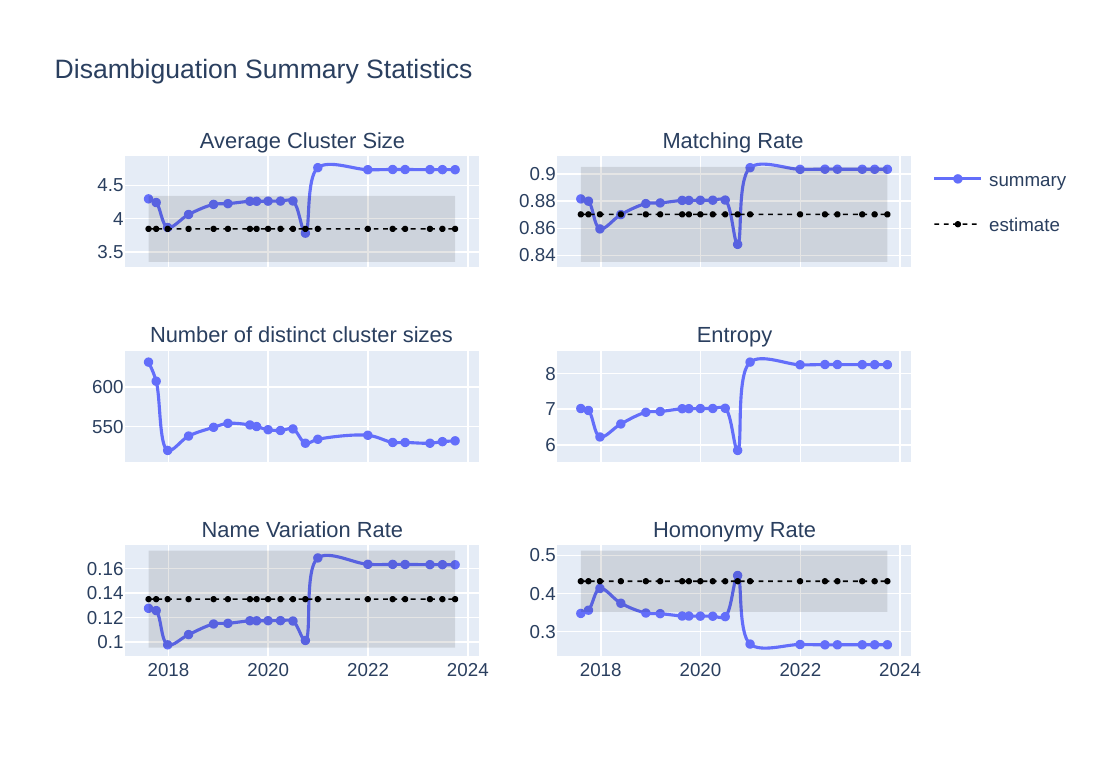}
    \caption{Summary statistics and estimates for the fixed data set of inventor mentions dating up to August 2017. Disambiguations of this fixed data set have changed over time, as changes to the algorithm were made and since information from additional records was used to resolve entities. As before, the dotted line is the estimate of the summary value for the true clustering of the August 2017 inventor mentions. The shaded bands are pointwise $95\%$ confidence intervals. Since the data set is fixed in this case, the estimates are constant over time.}
    \label{fig:summary_restricted}
\end{figure}

Overall, we recommend using summary statistics as a monitoring and quality control tool, tracking both global behavior and properties of fixed data subsets. Unexpected behaviors, sudden changes, or incompatibility with representative estimates, should trigger an investigation to validate the quality of the system's {inputs and outputs}.  

\subsection{Performance Estimates}\label{sec:results_estimates}

Figure \ref{fig:estimates} displays performance estimates over PatentsView's disambiguation history, with plus or minus one standard deviation confidence intervals. There is an important dip in performance before the beginning of 2021, which was then corrected. Performance in view of these metric estimates has been mostly stable since 2022, which provides some assurance in the quality of the linkages despite the changes in the algorithm. 

\begin{figure}[ht]
    \centering
    \includegraphics[width=\linewidth]{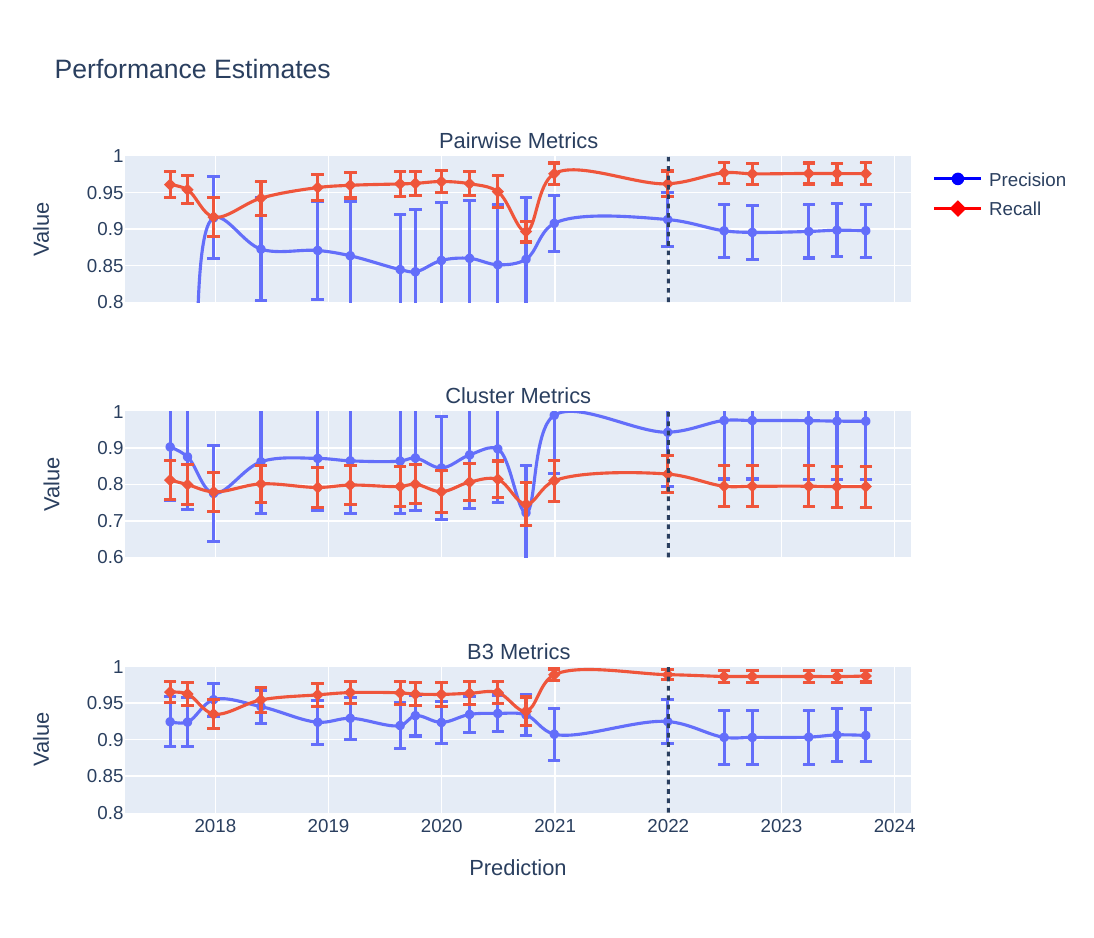}
    \caption{Performance metrics estimates and confidence intervals (plus or minus one estimated standard deviation) over PatentsView's disambiguation history. The ground truth data only cover inventor mentions up to December 31, 2021. As such, post-2021 estimates correspond to the accuracy of the disambiguation for data up to that point.}
    \label{fig:estimates}
\end{figure}

The estimated pairwise precision generally has remained lower than the estimated pairwise recall, and yet estimated cluster precision has been generally higher than estimated cluster recall. There are two things to note about this. First, uncertainty for the cluster precision estimates is 
large. This is because our sampling scheme, sampling clusters with probability proportional to size, leads to relatively 
few sampled instances of small clusters, despite small clusters being highly prevalent in the data. With cluster metrics putting equal weight on all clusters, this leads to higher estimated standard deviation. It would be possible to reduce uncertainty for the cluster metric estimates by {increasing the sample size or by } specifying an alternate sampling scheme that is 
more likely to result in the observation of small clusters, {for example, by stratifying based on the predicted cluster size corresponding to each record before sampling with probability proportional to size}. This could be relatively inexpensive, as manually reviewing small clusters is typically
faster than reviewing large clusters.

Furthermore, it is surprising to see pairwise recall estimates being higher than pairwise precision estimates since PatentsView has aimed to provide higher precision than recall (three of our authors have been directly involved in PatentsView). One key application of these metric estimates is to better align the accuracy and characteristics of entity resolution with business objectives. Accurate performance metric estimates can be used as objective functions for training machine learning models, for performing model selection, or for calibrating a given model. Accuracy objectives and the relative balance between metrics can be accounted for to satisfy requirements.

\subsection{Error analysis}

We now turn to the analysis of errors, their causes, and their relationship with features of interest.  Figure \ref{fig:error-auditing} displays the weighted relative frequency of observations made by a clerical reviewer (the first author) when analyzing errors presented using our error auditing tool. In practice, we would want to perform error auditing in two passes. The first would be a brainstorming session, taking notes to identify and define meaningful categories and labels that can be applied to different kinds of errors. A second pass then would implement the strategy derived from the first stage, helping provide actionable insights into causes of errors. Common issues could be investigated further by a development team. 

\begin{figure}[ht]
    \centering
    \includegraphics[width=\linewidth]{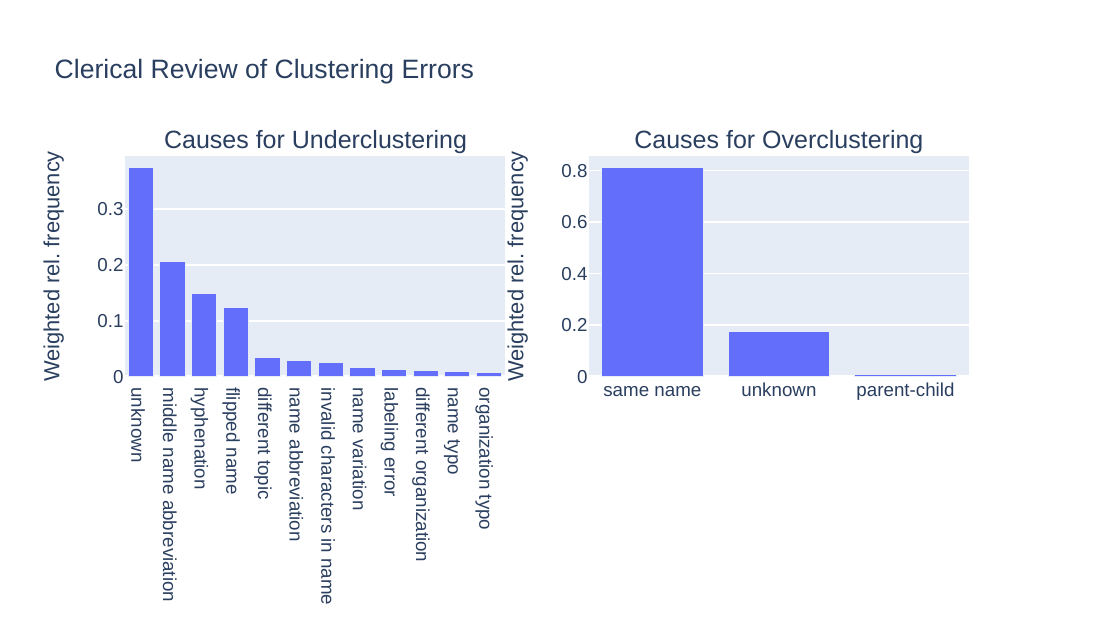}
    \caption{Reviewer's notes and their weighted relative frequencies for patterns in overclustering and underclustering errors.}
    \label{fig:error-auditing}
\end{figure}

Here, we show the raw data from the first brainstorming step, as it helps illustrate what kinds of observations and insights were first made. One common label is ``unknown.'' This label is applied when an entity is not correctly disambiguated, but it was not clear what could have been the cause of the error. As examples, a case could be particularly ambiguous; there could be an error in the data labeling; or, there could be insufficient contextual information in the data to justify combining two predicted clusters or separating one predicted cluster into two clusters. A second common label is ``same name,'' assigned to overclustering errors. This represents cases where inventor mentions were merged because of a shared name, even though other contextual information pointed towards the two representing different inventors. Otherwise, common underclustering errors are associated with variation in name spelling, such as a middle name being abbreviated or not, a name being hyphenated or not, a first and last name being written in one order or the other (which is common in certain cultures), or, less frequently, a typographical error in a name or an invalid character. A few underclustering errors are associated with the dissimilarity of patent topics or a typographical error in the spelling of the assigned organization. For a few cases, 
a labeling error might be a cause for the error. Note that the error auditing did not aim at finding errors in the data labeling, and so this label only represents anecdotal observations.

As previously noted, following a first observation and brainstorming step, a precise error auditing plan should be prepared. This plan should include clear definitions of a specific set of labels that can be applied to certain error cases. Following the application of this second step, results regarding key issues in the disambiguation can be communicated and used to consider mitigation methods.

\begin{figure}[ht]
    \centering
    \includegraphics[width=\linewidth]{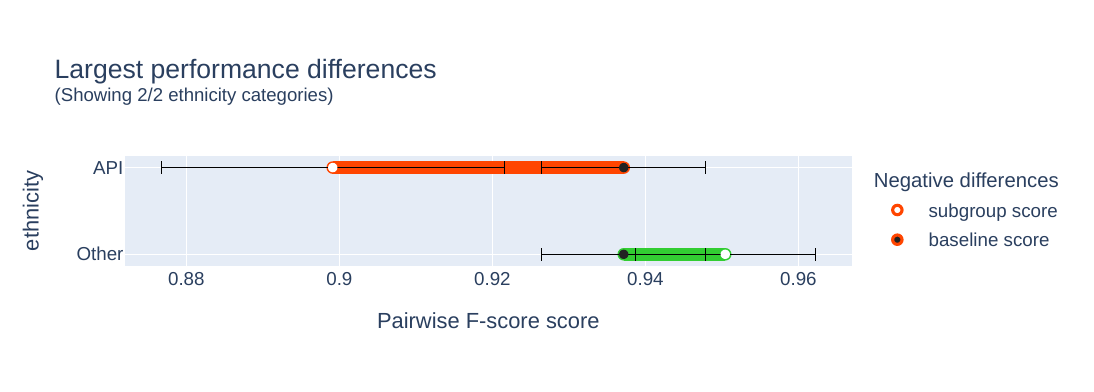}
    \caption{Performance difference from the baseline for inventors with an inferred Asian and Pacific Islander ethnicity (API) or other inferred ethnicity.}
    \label{fig:performance-bias}
\end{figure}

Figure \ref{fig:performance-bias} displays performance disparities between inventors with an inferred Asian and Pacific Islander ethnicity (API) or other inferred ethnicity. There is an estimated $5\%$ difference in F-score across these two subgroups. Although this is not a very large difference considering the uncertainty in estimation, analyzing performance disparities in this way can help identify subgroups for which more attention should be directed in the design of disambiguation algorithms.

\subsection{Simulation Study}\label{sec:simulations}

To conclude this section, we present two simulation studies. The first is based on the RLData10000 data set, and the second 
second uses PatentsView data up to May 28, 2018. 

The RLData10000 simulation is designed to evaluate the effectiveness of sampling with probabilities proportional to cluster size when clusters are small. 
We use a predicted disambiguation with high pairwise precision ($91\%$) and high pairwise recall ($97\%$), as we believe this makes for a challenging estimation task. Indeed, many clusters will be correctly disambiguated in this case, and therefore few errors will be observed. 

The PatentsView simulation is designed to validate the accuracy of our estimators specifically when applied to PatentsView's data. We use the December 30, 2021,  disambiguation as a ``ground truth,'' as we believe it is a close approximation to the true clustering in terms of cluster size distribution. We use the May 28, 2018 disambiguation as a prediction of the ``ground truth", as it is one of the earliest reliable disambiguations produced by PatentsView.
The May 28, 2018, disambiguation has $91\%$ pairwise precision and $94\%$ pairwise recall when compared to the December 30, 2021, disambiguation. This is a high accuracy bar, which makes the performance estimation problem challenging.

\subsubsection{RLData10000 Simulation}\label{sec:rldata-simulation}


\begin{figure}[ht]
    \centering
    \includegraphics[width=0.85\linewidth]{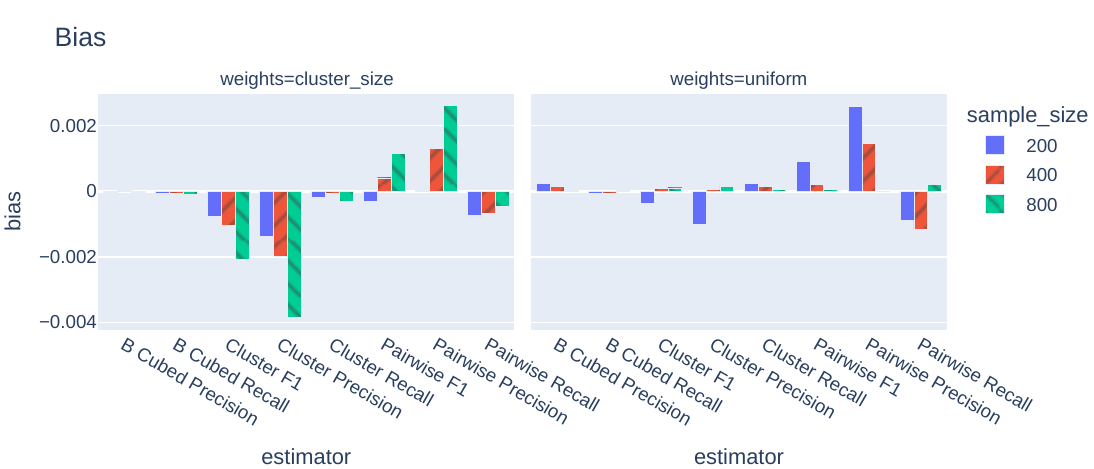}
    \includegraphics[width=0.85\linewidth]{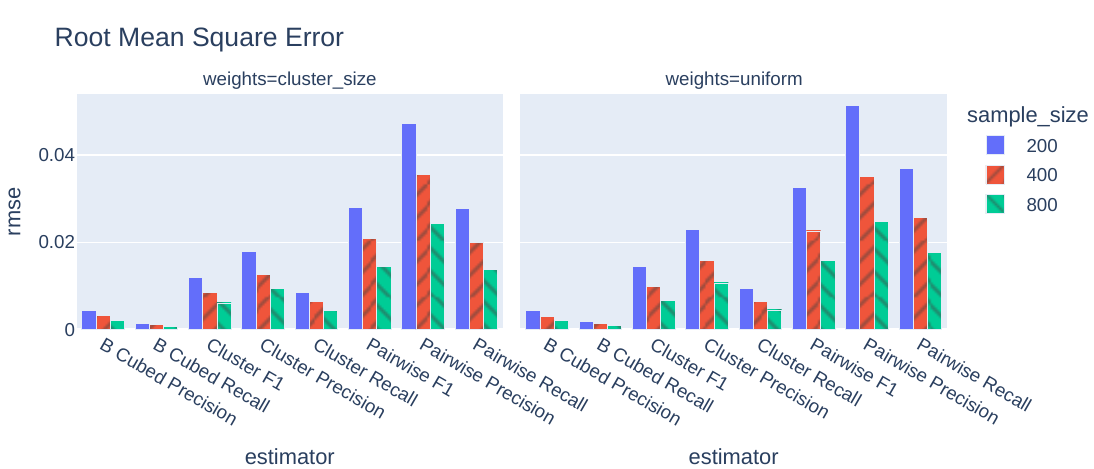}
    \includegraphics[width=0.85\linewidth]{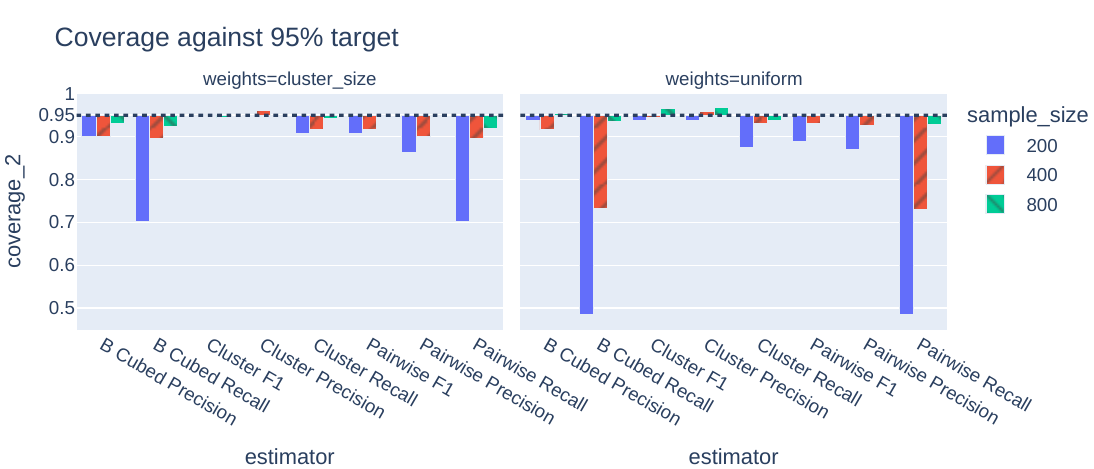}
    \caption{Simulation study based on the RLdata10000 data set. The accuracy of an ``all-but-one" matching algorithm was estimated by sampling ground truth clusters in a simulation replicated 1000 times per set of parameters. The estimates were compared to the known accuracy of the algorithm, specifically, $91\%$ pairwise precision and $97\%$ pairwise recall. Note that bias is below $0.4\%$ in all cases.}
    \label{fig:rldata-simulation}
\end{figure}

The RLData10000 data set \citep{RecordLinkage} is a synthetic data set containing 10,000 personal information records with first name, last name, birth date, birth month, and birth year. There are 1,000 clusters of size two and 8,000 singleton clusters. Individuals' names and birth dates sometimes appear with errors, and distinct individuals sometimes share the same names or birth dates.
As previously discussed, we consider an ``all-but-one'' matching algorithm for our predicted disambiguation. That is, we link together two records if and only if they match on four or five components among of first name, last name, birth year, birth month, and birth day. 

For evaluation, we sample clusters with replacement, either with probability proportional to cluster size (the default approach) or with uniform probabilities. These designs result in weights labeled respectively as  ``cluster\_size'' and ``uniform.'' We consider samples of sizes 200, 400, and 800. For each combination of parameters and estimator, we replicate the sampling and estimation process 1,000 times. Since we have ground truth for the RLData10000 data set, we can compute the bias and root mean square error (RMSE) of the point estimates. We also compute empirical coverage rates of approximate 95\%  confidence intervals. 
We consider coverage since the extent of any deviation from the nominal coverage is easy to visualize and understand. 
We note that it is generally 
challenging to achieve good coverage from large-sample confidence intervals,
especially when dealing with sparse data or skewed distributions, as the distribution of the estimator is only approximately normal. 

Figure \ref{fig:rldata-simulation} summarizes the results of 1,000 runs.  
The empirical bias is always smaller than $0.4\%$ and always less than $0.2\%$ when going up to samples of size at least 400. This validates the near unbiasedness of the estimators.

In terms of RMSE, the pairwise metric estimators are the least accurate, followed by cluster estimators, and then the highly accurate b-cubed estimators. To interpret the RMSE values for the pairwise precision estimator, we first point out that a data-free (and not recommended) estimator of precision equal to $100\%$ achieves RMSE of $9\%$, since the true pairwise precision is $91\%$. With probability proportional to cluster size sampling and at a sample of size 200, the RMSE is around $4.7\%$.  This decreases to $3.5\%$ at sample size 400 and $2.4\%$ at sample size 800. These RMSE values, while 
not insubstantial,
tend to be smaller than the corresponding RMSEs from uniform probability sampling. 
To see why the cluster and b-cubed estimators are more  accurate, note that  the  
pairwise estimators are defined in terms of pairs of records that are predicted to match or that are true matches. There are only $1,000$ matching pairs of records across the $9,000$ clusters in this data set, and there is a roughly similar number of predicted matching pairs. This makes estimation difficult as a large number of sampled clusters will not be associated with any predicted or matching pair. On the other hand, cluster and b-cubed metrics are defined relative to the populations of true and predicted clusters, for which we collect information in each sample.

Finally, we consider the coverage of the approximate 95\% confidence intervals. The ``coverage\_2'' label represents confidence intervals defined as the point estimate plus or minus two times the estimated standard deviation. 
The coverage rate for both precision and recall estimators is low at sample size 200. However, with sampling probability to cluster size, coverage rates are at least $90\%$ when using samples of size 400 or larger. At sample size 800, the coverage rate is roughly nominal. The coverage rates when sampling with uniform probability weights are lower in general.  This is due to the fact that errors are more rarely observed with a uniform design, leading to sparse data and a more variable standard deviation estimator. Overall, we attribute the less-than-nominal coverage rates in lower sampler samper size to two factors, namely non-normality of the estimator's sampling distribution and excessive variability in the standard deviation estimates.
This is evident in the 
distribution of the standard deviation estimator, displayed in Figure \ref{fig:std_distribution} for the pairwise precision and pairwise recall estimators when sampling with probability proportional to cluster size. With size $200$, the empirical distribution of the pairwise recall standard deviations has a point mass at $0$. This corresponds to cases where no underclustering errors were observed in the sample, leading to a recall estimate of $100\%$ with $0$ standard deviation. 

\begin{figure}[ht]
    \centering
    \includegraphics[width=\linewidth]{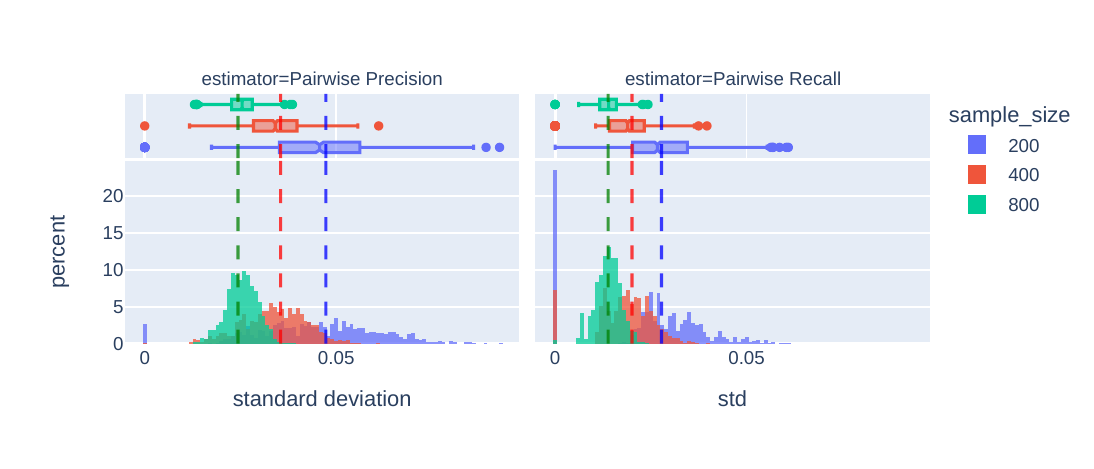}
    \caption{Distribution of the standard deviation estimator for pairwise precision and pairwise recall estimators, with probability proportional to cluster size sampling.}
    \label{fig:std_distribution}
\end{figure}

Overall, we  take away from the simulation study that the estimators can offer accurate reflections of the performance metrics, especially when using probability proportional to size sampling with sufficient sample sizes.  

\begin{figure}[ht]
    \centering
    \includegraphics[width=0.85\linewidth]{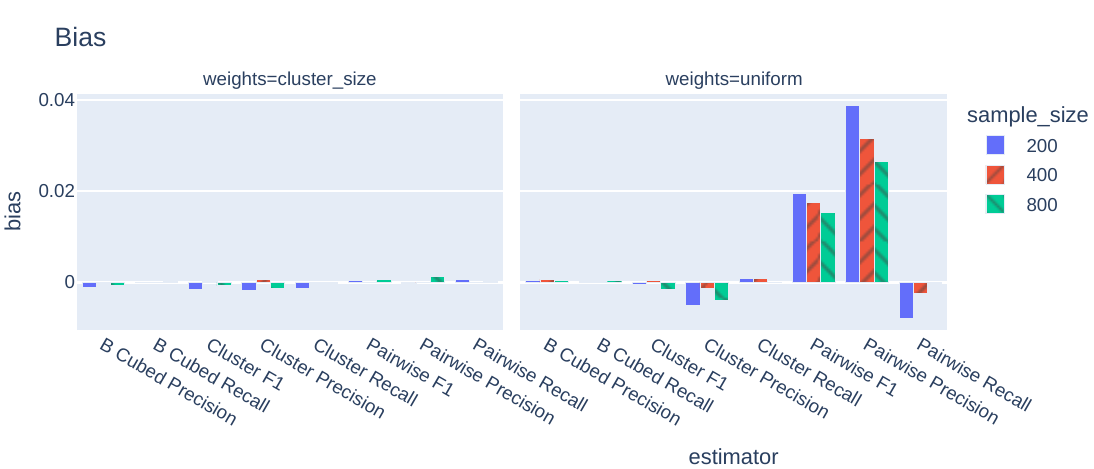}
    \includegraphics[width=0.85\linewidth]{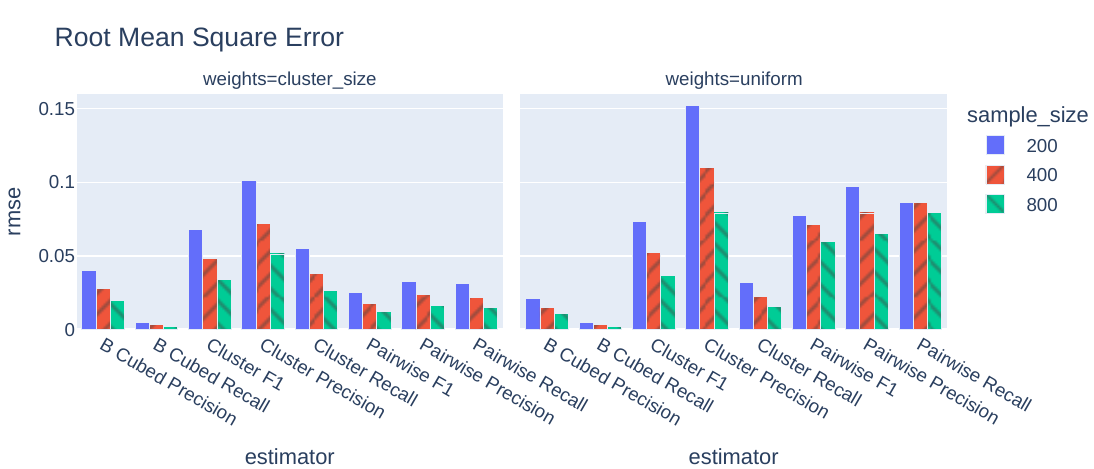}
    \includegraphics[width=0.85\linewidth]{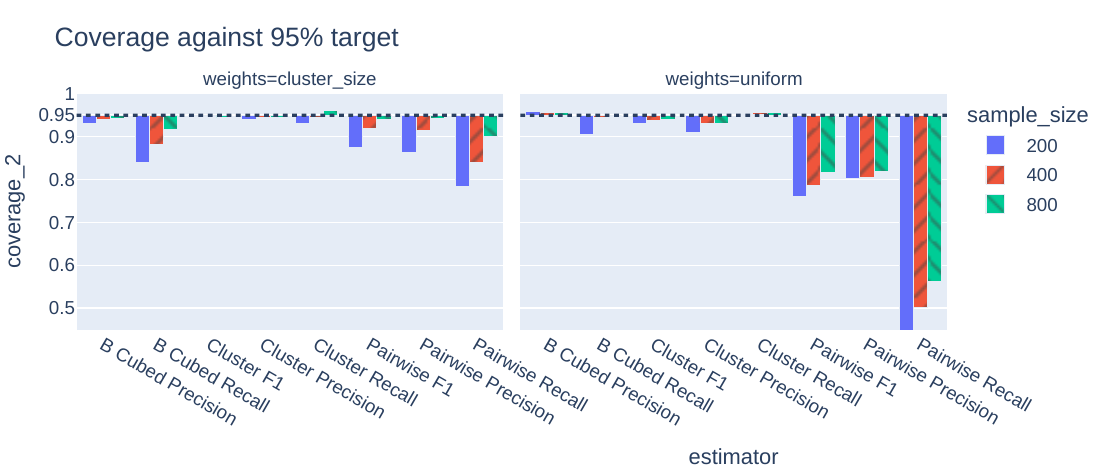}
    \caption{Simulation study based on the PatentsView's historical disambiguations, where the December 31, 2021 disambiguation was taken as a prediction, and the May 28, 2018 disambiguation was taken as ``ground truth". The accuracy of the predicted disambiguation was estimated by sampling ground truth clusters in a simulation replicated 1000 times per set of parameters. The estimates were compared to the known accuracy of the algorithm, specifically, $91\%$ pairwise precision and $94\%$ pairwise recall. Note that bias is essentially zero when sampling with probability proportional to cluster size (the default approach), and only substantial for pairwise precision estimates when sampling clusters with uniform probabilities.}
    \label{fig:pv-simulation}
\end{figure}

\subsubsection{PatentsView Data Simulation}

We now consider the simulation based on PatentsView data. 
We use the same parameters as for the RLData10000 simulation described in section \ref{sec:rldata-simulation}.  Figure \ref{fig:pv-simulation} displays the results of the simulation.

In terms of bias, uniform sampling weights are unreliable for estimating pairwise precision in this simulation. We do not recommend using uniform sampling weights for data like the PatentsView data, as large cluster sizes with many errors may not be observed frequently in the sample. On the other hand, the bias of estimators is negligible when using sampling with probability proportional to cluster sizes.

Considering RMSE, in this simulation, we see that  
sampling with probability proportional to cluster size leads to reasonable RMSEs for b-cubed and pairwise metrics estimators, and the uniform sampling design continues to be inadequate for PatentsView data, as evident by the large RMSE of the corresponding estimators. 
The reason for the large RMSE of cluster metrics estimates, when sampling with probability proportional to cluster size, is the same as for their 
unreliable confidence interval coverage discussed in \ref{sec:results_estimates}. That is, 
relatively few small clusters are sampled, despite small clusters being the most prevalent. Since the cluster metrics put equal weight on all clusters, with small clusters being the most prevalent, this leads to increased variability of the estimates.

Considering the coverage of confidence intervals, we see  similar behavior as in the RLData10000 simulation when sampling with probability proportional to cluster size. Coverage rates  approach the nominal 95\% level as the sample size grows. Sample sizes of 400 or 800 appear necessary to obtain reasonable coverage with these data. Confidence intervals of pairwise metrics are unreliable when sampling clusters uniformly.

Overall, the simulation study demonstrates the effectiveness our estimators when using simple random sampling of records, which is a convenient design for implementation and corresponds to sampling clusters with probability proportional to their size. Accurate estimates can be obtained for sample sizes that are practical in real applications.

\section{Discussion}\label{sec:discussion}

This paper introduces a novel evaluation framework for entity resolution systems. It ties an entity-centric data labeling methodology together with informative evaluation tasks, such as monitoring, performance estimation, and error analysis. Furthermore, the framework unifies many aspects of the evaluation process through the definition of two key metrics defined at the record or cluster level: the \textbf{overclustering error} and the \textbf{underclustering error} (section \ref{sec:error_space_definition}). All of the performance estimators are derived from these two metrics or simple variants, making it straightforward to relate error analysis with performance metric estimates and to extend the framework to estimate additional performance metrics.

We have demonstrated how our framework can be used in practice, without requiring the use of sophisticated sampling schemes. Once a weighted benchmark data set has been collected using our data labeling methodology, it can be used to evaluate multiple different disambiguation algorithms in multiple different ways. The labeled data are not tied to a singular algorithm or evaluation objective.  Furthermore, we validated the estimators and data labeling methodology in simulation studies, providing evidence that sampling clusters with probability proportional to size (via sampling records uniformly at random) can facilitate  accurate estimation of key metrics in different situations.

There is opportunity for future work on the design of refined sampling schemes and estimators. A finite population sampling point of view would be useful to accomodate smaller data sets and data sets with very large clusters. Adaptive sampling schemes, or sampling schemes derived from model-based estimates, could improve efficiency. Finally, more sophisticated ratio estimators and variance estimators could be considered and compared, including model-based or model-assisted estimators that use covariates available at the record level. Our unified evaluation framework provides opportunity for such sampling schemes and estimators to be useful for a large range of evaluation objectives. Furthermore, our framework accomodates the propagation of labeling uncertainty into estimates. This is an important topic that can be explored in more depth. Finally, it would be useful to be able to extrapolate the performance of a given algorithm over time, as more records are collected, or when applied to larger datasets. This could help extrapolate performance from artificial benchmarks to real data, or help anticipate performance degradations. Our current evaluation framework can be used as a starting point for these problems.

\section*{Software}
Our evaluation framework is implemented in the ``ER-Evaluation'' Python package \citep{Binette2023joss} available at \url{https://github.com/OlivierBinette/er-evaluation/}.

\section*{Author Contributions}

Olivier Binette led the project, the methodological research and development, and the writing. Youngsoo Baek contributed to the development of summary statistic estimators. Siddharth Engineer provided context and user analytics for PatentsView, using OCR and large language models to classify hundreds of papers citing PatentsView. Christina Jones contributed to the development of methods and provided context and user analytics for PatentsView. Abel Dasylva contributed the section on industry standards and to the methodology. Jerome P. Reiter contributed to the development of the statistical methods and the writing.

\appendix
\section{Proofs}

\begin{proof}[Proof of Lemma \ref{lemma:pairwise_precision_recall}]
First, we express $\lvert \mathcal{P} \rvert$ as a sum over $c\in \mathcal{C}$ through
    \begin{align}
        \lvert \mathcal{P} \rvert 
        = \sum_{\hat c \in \hat{\mathcal{C}}} {\lvert \hat c \rvert \choose 2}
        = \sum_{r\in \mathcal{R}} \frac{1}{\lvert \hat c(r) \rvert} {\lvert \hat c(r) \rvert \choose 2}
        = \frac{1}{2}\sum_{c\in \mathcal{C}} \sum_{r \in c} \left( \lvert \hat c(r) \rvert - 1 \right).\label{eq:proof_num_links}
    \end{align}
    For a given cluster $c \in \mathcal{C}$ and $r \in c$, adding and substracting $\lvert c \rvert$, we find $\lvert \hat c(r) \rvert - 1 = \lvert c \rvert - 1 + \texttt{SDE}(r)$. Substituting this expression into \eqref{eq:proof_num_links}, we obtain
    \begin{equation}\label{eq:P_expression}
        \lvert \mathcal{P} \rvert = \frac{1}{2} \sum_{c\in\mathcal{C}}\sum_{r\in c} \left( \lvert c \rvert - 1 + \texttt{SDE}(r) \right) = \frac{1}{2} \sum_{c\in\mathcal{C}}\lvert c \rvert \left( \lvert c \rvert - 1 + \texttt{SDE}(c) \right).
    \end{equation}
Similarly, we express $\lvert \mathcal{T} \cap \mathcal{P} \rvert$ as
    \begin{equation}
        \lvert \mathcal{T} \cap \mathcal{P} \rvert = \sum_{r \in \mathcal{R}} \frac{1}{\lvert \hat c(r) \cap c(r) \rvert} {\lvert \hat c(r) \cap c(r) \rvert \choose 2} = \frac{1}{2} \sum_{r\in \mathcal{R}}(\lvert \hat c(r) \cap c(r) \rvert - 1).
    \end{equation}
Using the fact that for $r \in c$ we have $\lvert \hat c(r) \cap c(r) \rvert = \lvert c \rvert - \texttt{UCE}(r)$, and averaging over $c \in \mathcal{C}$, we find
\begin{equation}\label{eq:P_cap_T_expression}
     \lvert \mathcal{T} \cap \mathcal{P} \rvert = \frac{1}{2}\sum_{r \in \mathcal{R}} (\lvert c \rvert - \texttt{UCE}(r)) = \frac{1}{2}\sum_{c\in \mathcal{C}}\lvert c \rvert (\lvert c \rvert - \texttt{UCE}(c)).
\end{equation}
Finally, it follows from definition that 
\begin{equation}\label{eq:T_expression}
\lvert \mathcal{T} \rvert = \frac{1}{2}\sum_{c\in \mathcal{C}} \lvert c \rvert (\lvert c \rvert - 1).
\end{equation}

The lemma follows directly from our expressions for $\lvert \mathcal{P} \rvert$, $\lvert \mathcal{P} \cap \mathcal{T} \rvert$, and $\lvert \mathcal{T} \rvert$ after re-expressing them as expectations over a random cluster $c$ distributed with probabilities proportional to $p_c > 0$.
\end{proof}

\begin{proof}[Proof of Lemma \ref{lemma:f-score}]
    First write
    \begin{equation}
        F_{\beta} = \frac{(1+\beta^2) \lvert \mathcal{T} \cap \mathcal{P} \rvert}{\lvert \mathcal{P} \rvert + \beta^2 \lvert \mathcal{T} \rvert}.
    \end{equation}
    Substituting \eqref{eq:P_expression}, \eqref{eq:T_expression}, and \eqref{eq:P_cap_T_expression} in the above, we obtain
    \begin{align}
        F_{\beta} = \frac{(1+\beta^2) \sum_{c\in \mathcal{C}} \lvert c \rvert (\lvert c \rvert - \texttt{UCE}(c)) }{\sum_{c\in \mathcal{C}}  \lvert c \rvert (\lvert c \rvert - 1 + \texttt{SDE}(c) + \beta^2 (\lvert c \rvert - 1)) }
        =\frac{\sum_{c\in \mathcal{C}} \lvert c \rvert (\lvert c \rvert - \texttt{UCE}(c)) }{\sum_{c\in \mathcal{C}}  \lvert c \rvert \left(\lvert c \rvert - 1 + \tfrac{1}{1+\beta^2}\texttt{SDE}(c)\right)}.
    \end{align}
    The lemma follows after re-expressing the sums as expectations over a random cluster $c$ distributed with probabilities proportional to $p_c > 0$.
\end{proof}

\begin{proof}[Proof of Lemma \ref{lemma:cluster_metrics}]

Write $\tilde p_c = p_c /\sum_{c'\in \mathcal{C}}p_{c'}$ for the normalized probability mass function of the random cluster $c$, and let $w_c = \lvert \mathcal{C} \rvert^{-1}/\tilde p_c$ be the ratio of the constant probability mass function to $\tilde p_c$.  From the fact that $N = \sum_{c\in \mathcal{C}} \lvert c \rvert$ we can derive 

\begin{equation}\label{eq:C_expr}
    \lvert \mathcal{C} \rvert = N/\mathbb{E}[\lvert c \rvert w_c].
\end{equation}

As such, we find
\begin{equation}\label{eq:C_cap_Chat_expr}
\lvert \mathcal{C} \cap \hat{\mathcal{C}}\rvert = \sum_{c\in\mathcal{C}} \texttt{EI}(c) = \lvert \mathcal{C} \rvert\, \mathbb{E}[\texttt{EI}(c)w_c] = \frac{N \mathbb{E}[\texttt{EI}(c)w_c]}{\mathbb{E}[\lvert c \rvert w_c]}
\end{equation}
and, after simplifying normalizing contants,
\begin{equation}
cP = \frac{\lvert \mathcal{C} \cap \hat{\mathcal{C}}\rvert}{\lvert \hat{\mathcal{C}} \rvert} = \frac{N \mathbb{E}[\texttt{EI}(c)/ p_c]}{\lvert \hat{\mathcal{C}} \rvert \mathbb{E}[\lvert c \rvert / p_c]}.
\end{equation}
The expression for $cR$ is a standard self-normalized importance sampling representation, i.e., 
\begin{equation}
cR = \frac{\lvert \mathcal{C} \cap \hat{\mathcal{C}}\rvert}{\lvert {\mathcal{C}} \rvert}= \sum_{c\in\mathcal{C}} \lvert \mathcal{C} \rvert^{-1} \texttt{EI}(c)
 = \frac{\mathbb{E}[\texttt{EI}(c)w_c]}{\mathbb{E}[w_c]} =\frac{\mathbb{E}[\texttt{EI}(c)/p_c]}{\mathbb{E}[1/p_c]}.
\end{equation}
Finally, using \eqref{eq:C_expr} and \eqref{eq:C_cap_Chat_expr}, we find
\begin{equation}
cF_\beta =\frac{(1+\beta^2) \lvert \mathcal{C} \cap \hat{\mathcal{C}}\rvert}{\lvert \hat{\mathcal{C}}\rvert + \beta^2\lvert\mathcal{C}\rvert} = \frac{(1+\beta^2) \mathbb{E}[\texttt{EI}(c)w_c]}{\lvert \hat{\mathcal{C}}\rvert/\lvert\mathcal{C}\rvert + \beta^2}=\frac{(1+\beta^2) \mathbb{E}[\texttt{EI}(c)/p_c]}{\lvert \hat{\mathcal{C}}\rvert \mathbb{E}[\lvert c \rvert / p_c]/N + \mathbb{E}[\beta^2/p_c]}.
\end{equation}

\end{proof}

\begin{proof}[Proof of Lemma \ref{lemma:b_cubed_metrics}]
    This lemma follows directly from definitions when using self-normalized importance sampling representations as above.
\end{proof}

\begin{proof}[Proof of Lemma \ref{lemma:lemma_5}]
    This follows directly from the definitions \eqref{eq:lemma_5_1}, \eqref{eq:lemma_5_2}, and \eqref{eq:lemma_5_3}.
\end{proof}

\bibliographystyle{chicago}
\bibliography{main}

\end{document}